\documentclass{article}

\newif\ifsup
\newif\ifnips
\nipsfalse
\suptrue

\ifnips
\usepackage{nips15submit_e}
\else
\setlength{\paperheight}{11in}
\setlength{\paperwidth}{8.5in}
\oddsidemargin .5in    
\evensidemargin .5in
\marginparwidth 0.07 true in
\topmargin -0.625in
\addtolength{\headsep}{0.25in}
\textheight 9.0 true in       
\textwidth 5.5 true in        
\widowpenalty=10000
\clubpenalty=10000

\flushbottom \sloppy

\fi
\usepackage{url}


\usepackage[disable]{todonotes}

\newcommand{\todot}[1]{\todo[inline,color=blue!20!white]{\scriptsize #1}}

\usepackage{latexsym}
\usepackage{amsmath}
\usepackage{amssymb}
\usepackage{mathtools}
\usepackage{enumerate}
\usepackage{accents}
\usepackage{tikz}
\ifnips
\usetikzlibrary{external}
\tikzexternalize[prefix=tikz/]
\fi

\usepackage{pgfplots}
\usepackage[boxed]{algorithm}
\usepackage{algpseudocode}

\usepackage{nicefrac}
\usepackage{dsfont}
\usepackage[bf]{caption}
\ifnips
\usepackage{hyperref}
\hypersetup{
    bookmarks=true,         
    unicode=false,          
    pdftoolbar=true,        
    pdfmenubar=true,        
    pdffitwindow=false,     
    pdfstartview={FitH},    
    pdftitle={My title},    
    pdfauthor={Author},     
    pdfsubject={Subject},   
    pdfcreator={Creator},   
    pdfproducer={Producer}, 
    pdfkeywords={keyword1} {key2} {key3}, 
    pdfnewwindow=true,      
    colorlinks=true,       
    linkcolor=red,          
    citecolor=blue,        
    filecolor=magenta,      
    urlcolor=cyan           
}
\else
\usepackage[colorlinks=true,citecolor=blue]{hyperref}
\fi
\usepackage{amsthm}
\usepackage{times}
\usepackage{natbib}
\usepackage{wrapfig}
\usepackage{enumitem}
\usepackage[capitalize]{cleveref}


\newcommand{\E}{\mathbb E}

\newcommand{\calF}{\mathcal F}

\newcommand{\calB}{\mathcal B}
\newcommand{\sr}[1]{\stackrel{#1}}
\newcommand{\set}[1]{\left\{#1\right\}}
\newcommand{\ind}[1]{\mathds{1}\!\set{#1}}
\newcommand{\argmax}{\operatornamewithlimits{arg\,max}}

\newcommand{\ceil}[1]{\left \lceil {#1} \right\rceil}
\newcommand{\eqn}[1]{\begin{align}#1\end{align}}

\newcommand{\eq}[1]{\begin{align*}#1\end{align*}}
\newcommand{\frontier}{\ensuremath{\delta\calB}}

\renewcommand{\P}[1]{\mathbb{P}\left\{#1\right\}}

\newcommand{\KL}{\operatorname{KL}}
\newcommand{\plog}{\operatorname{ProductLog}}
\newcommand{\logp}{\log_+\!\!}

\newcommand{\R}{\mathbb R}

\let\epsilon\varepsilon


\theoremstyle{plain}
\newtheorem{theorem}{Theorem}

\newtheorem{lemma}[theorem]{Lemma}
\newtheorem{corollary}[theorem]{Corollary}
\theoremstyle{definition}

\theoremstyle{remark}

\ifnips
\title{
The Pareto Regret Frontier for Bandits
}
\date{}
\author{Tor Lattimore \\
Department of Computing Science \\ University of Alberta, Canada \\ \texttt{tor.lattimore@gmail.com}
}

\else
\title{\vskip 2mm\bf\LARGE\hrule height3pt \vskip 6mm
The Pareto Regret Frontier for Bandits
\vskip 6mm \hrule height1pt \vskip 5mm}
\date{}
\author{\bf Tor Lattimore \\
Department of Computing Science \\
University of Alberta, Canada \\ 
\texttt{tor.lattimore@gmail.com}
}
\fi

\ifnips
\nipsfinalcopy
\fi

\begin{document}

\maketitle

\begin{abstract}
Given a multi-armed bandit problem it may be desirable to achieve a smaller-than-usual worst-case regret for some special actions.
I show that the price for such unbalanced worst-case regret guarantees is rather high. Specifically, if an algorithm enjoys a worst-case
regret of $B$ with respect to some action, then there must exist another action for which the worst-case regret is at least $\Omega(nK/B)$, where $n$ is
the horizon and $K$ the number of actions. I also give upper bounds in both the stochastic and adversarial settings showing that this result cannot
be improved. For the stochastic case the pareto regret frontier is characterised exactly up to constant factors.
\end{abstract}

\section{Introduction}

The multi-armed bandit is the simplest class of problems that exhibit the exploration/exploitation dilemma.
In each time step the learner chooses one of $K$ actions and receives a noisy reward signal for the chosen action. 
A learner's performance is measured in terms of the regret, which is the (expected) difference between the rewards it actually received
and those it would have received (in expectation) by choosing the optimal action.

Prior work on the regret criterion for finite-armed bandits has treated all actions uniformly and has aimed for bounds on the regret that
do not depend on which action turned out to be optimal. I take a different approach and ask what can be achieved if some actions are given
special treatment. Focussing on worst-case bounds, I ask whether or not it is possible to achieve improved worst-case regret for some
actions, and what is the cost in terms of the regret for the remaining actions. Such results may be useful in a variety of cases. For example,
a company that is exploring some new strategies might expect an especially small regret if its existing strategy turns out to be (nearly) optimal.

This problem has previously been considered in the experts setting where the learner is allowed to observe the reward for all actions in every round, not only for the
action actually chosen. The earliest work seems to be by \cite{HP05} where it is shown that the learner can assign a prior weight to each action and pays
a worst-case regret of $O(\sqrt{-n \log \rho_i})$ for expert $i$ where $\rho_i$ is the prior belief in expert $i$ and $n$ is the horizon. The uniform
regret is obtained by choosing $\rho_i = 1/K$, which leads to the well-known $O(\sqrt{n \log K})$ bound achieved by the exponential weighting algorithm \citep{Ces06}.
The consequence of this is that an algorithm can enjoy a constant regret with respect to a single action while suffering minimally on the remainder.
The problem was studied in more detail by \cite{Koo13} where (remarkably) the author was able to {\it exactly} describe the pareto regret frontier when $K=2$. 

Other related work (also in the experts setting) is where the objective is to obtain an improved regret against a mixture of available experts/actions
\citep{EKMW08,KP11}. In a similar vain, \cite{SNL14} showed that algorithms for prediction with expert advice can be combined with minimal cost to obtain the
best of both worlds. In the bandit setting I am only aware of the work by \cite{LL15} who study the effect of the prior on the regret of Thompson sampling in
a special case. In contrast the lower bound given here applies to all algorithms in a relatively standard setting.

The main contribution of this work is a characterisation of the pareto regret frontier (the set of achievable worst-case regret bounds) 
for stochastic bandits. 

Let $\mu_i \in \R$ be the unknown mean of the $i$th arm and assume that $\sup_{i,j} \mu_i - \mu_j \leq 1$. In each time step the learner chooses an
action $I_t \in \set{1,\ldots, K}$ and receives reward $g_{I_t,t} = \mu_i + \eta_t$ where $\eta_t$ is the noise term that I assume to be 
sampled independently from a $1$-subgaussian distribution that may depend on $I_t$. This model subsumes both Gaussian and Bernoulli (or bounded)
rewards. Let $\pi$ be a bandit strategy, which is a function from histories of observations to an action $I_t$. Then the $n$-step 
expected pseudo regret with respect to the $i$th arm is
\eq{
R^\pi_{\mu,i} = n \mu_i - \E \sum_{t=1}^n \mu_{I_t}\,,
}
where the expectation is taken with respect to the randomness in the noise and the actions of the policy.
Throughout this work $n$ will be fixed, so is omitted from the notation.
The worst-case expected pseudo-regret with respect to arm $i$ is 
\eqn{
\label{eq:wcregret}
R^\pi_i = \sup_\mu R^\pi_{\mu, i}\,.
}
This means that $R^\pi \in \R^K$ is a vector of worst-case pseudo regrets with respect to each of the arms.
Let $\calB \subset \R^K$ be a set defined by
\eqn{
\label{def:B}
\calB = \set{B \in [0,n]^K : B_i \geq \min\set{n,\, \sum_{j \neq i} \frac{n}{B_j}} \text{ for all } i}\,.
}
The boundary of $\calB$ is denoted by $\frontier$.
The following theorem shows that $\frontier$ describes the pareto regret frontier up to constant factors. \\
\vspace{-0.2cm}
\begin{center}
\begin{tikzpicture}
\node[draw,text width=12.5cm,inner sep=5pt] at (0,0) {
\vspace{-0.2cm}
\begin{center}
\bf Theorem \\
\end{center}
There exist universal constants $c_1 = 8$ and $c_2 = 252$ such that: \\[0.3cm]
{\bf Lower bound:} for $\eta_t \sim \mathcal N(0,1)$ and all strategies $\pi$ we have $c_1(R^\pi + K) \in \calB$ \\[0.2cm]
{\bf Upper bound:} for all $B \in \calB$ there exists a strategy $\pi$ such that $R^\pi_i \leq c_2 B_i$ for all $i$
};
\end{tikzpicture}
\end{center}
Observe that the lower bound relies on the assumption that the noise term be Gaussian while the upper bound holds for subgaussian
noise. The lower bound may be generalised to other noise models such as Bernoulli, but does not hold for all subgaussian noise models. For example,
it does not hold if there is no noise ($\eta_t = 0$ almost surely).

The lower bound also applies to the adversarial framework where the rewards may be chosen arbitrarily.
Although I was not able to derive a matching upper bound in this case, a simple modification of the Exp-$\gamma$ algorithm \citep{BC12} leads to an algorithm
with 
\eq{
R^\pi_1 \leq B_1 \quad\text{and}\quad R^\pi_k \lesssim \frac{nK}{B_1} \log\left(\frac{nK}{B_1^2}\right) \text{ for all } k \geq 2 \,,
}
where the regret is the adversarial version of the expected regret. 
\ifsup
The details may be found in the Appendix.
\else
Details are in the supplementary material.
\fi

The new results seem elegant, but disappointing. In the experts setting we have seen that the learner can distribute a prior amongst the actions and
obtain a bound on the regret depending in a natural way on the prior weight of the optimal action. In contrast, in the bandit setting the learner 
pays an enormously higher price to obtain a small regret with respect to even a single arm. In fact, the learner must essentially choose a single arm to favour,
after which the regret for the remaining arms has very limited flexibility. Unlike in the experts setting, if even a single arm enjoys constant worst-case 
regret, then the worst-case regret with respect to all other arms is necessarily linear.

\section{Preliminaries}\label{sec:prel}

I use the same notation as \cite{BC12}.
Define $T_i(t)$ to be the number of times action $i$ has been chosen after time step $t$ and $\hat \mu_{i,s}$ to be the empirical estimate
of $\mu_i$ from the first $s$ times action $i$ was sampled. This means that $\hat \mu_{i,T_i(t-1)}$ is the empirical estimate of $\mu_i$ at the start of the $t$th round.
I use the convention that $\hat \mu_{i,0} = 0$.
Since the noise model is $1$-subgaussian we have
\eqn{
\label{eq:subgaussian}
\forall \epsilon > 0 \qquad \P{\exists s \leq t : \hat \mu_{i,s} - \mu_i \geq \epsilon / s} \leq \exp\left(-\frac{\epsilon^2}{2t} \right)\,.
}
\ifsup
This result is presumably well known, but a proof is included in \cref{app:conc} for convenience.
\else
This result is presumably well known, but a proof is included in the supplementary material for convenience.
\fi
The optimal arm is $i^* = \argmax_i \mu_i$ with ties broken in some arbitrary way. The optimal reward is $\mu^* = \max_i \mu_i$.
The gap between the mean rewards of the $j$th arm and the optimal arm is 
$\Delta_j = \mu^* - \mu_j$ and $\Delta_{ji} = \mu_i - \mu_j$.
The vector of worst-case regrets is $R^\pi \in \R^K$ and has been defined already in \cref{eq:wcregret}. 
I write $R^\pi \leq B \in \R^K$ if $R^\pi_i \leq B_i$ for all $i \in \set{1,\ldots,K}$.
For vector $R^\pi$ and $x \in \R$ we have $(R^\pi + x)_i = R^\pi_i + x$. 

\section{Understanding the Frontier}\label{sec:frontier}

Before proving the main theorem I briefly describe the features of the regret frontier.
First notice that if $B_i = \sqrt{n(K-1)}$ for all $i$, then
\eq{
B_i 
= \sqrt{n(K-1)} 
= \sum_{j\neq i} \sqrt{n/(K-1)} 
= \sum_{j\neq i} \frac{n}{B_j}\,.
}
Thus $B \in \calB$ as expected. This particular $B$ is witnessed up to constant factors by MOSS \citep{AB09} and OC-UCB \citep{Lat15-ucb}, but
not UCB \citep{ACF02}, which suffers $R^\text{ucb}_i \in \Omega(\sqrt{n K \log n})$.

Of course the uniform choice of $B$ is not the only option. Suppose the first arm is special, so $B_1$ should be chosen especially small.
Assume without loss of generality that $B_1 \leq B_2 \leq \ldots \leq B_K \leq n$.
Then by the main theorem we have
\eq{
B_1 \geq \sum_{i=2}^K \frac{n}{B_i} \geq \sum_{i=2}^k \frac{n}{B_i} \geq \frac{(k-1)n}{B_k}\,.
}
Therefore
\eqn{
\label{eq:simple-lower}
B_k \geq \frac{(k-1)n}{B_1}\,.
}
This also proves the claim in the abstract, since it implies that $B_K \geq (K-1)n / B_1$.
If $B_1$ is fixed, then choosing $B_k = (k-1)n / B_1$ does not lie on the frontier because
\eq{
\sum_{k=2}^K \frac{n}{B_k} = \sum_{k=2}^K \frac{B_1}{k-1} \in \Omega(B_1 \log K)
}
However, if $H = \sum_{k=2}^K 1/(k-1) \in \Theta(\log K)$, then choosing $B_k = (k-1)nH / B_1$ 
does lie on the frontier and is a factor of $\log K$ away from the lower bound given in \cref{eq:simple-lower}. 
Therefore up the a $\log K$ factor, points on the regret frontier are characterised entirely by a permutation determining
the order of worst-case regrets and the smallest worst-case regret.

Perhaps the most natural choice of $B$ (assuming again that $B_1 \leq\ldots \leq B_K$) is
\eq{
B_1 = n^p \qquad \text{and} \qquad B_k = (k-1)n^{1-p}H \text{ for } k > 1\,.
}
For $p = 1/2$ this leads to a bound that is at most $\sqrt{K} \log K$ worse than that obtained by MOSS and OC-UCB while being a factor of $\sqrt{K}$ better for 
a select few.

\subsubsection*{Assumptions}
The assumption that $\Delta_i \in [0,1]$ is used to avoid annoying boundary problems caused by the fact that time is discrete. This means that if $\Delta_i$ is extremely large, then
even a single sample from this arm can cause a big regret bound. This assumption is already quite common, for example a worst-case regret of $\Omega(\sqrt{Kn})$ clearly does not
hold if the gaps are permitted to be unbounded. Unfortunately there is no perfect resolution to this annoyance. Most elegant would be to allow time to be continuous with actions
taken up to stopping times. Otherwise you have to deal with the discretisation/boundary problem with special cases, or make assumptions as I have done here.

\section{Lower Bounds}\label{sec:lower}

\begin{theorem}\label{thm:lower}
Assume $\eta_t \sim \mathcal N(0, 1)$ is sampled from a standard Gaussian.
Let $\pi$ be an arbitrary strategy, then $\displaystyle 8(R^\pi + K) \in \calB$.
\end{theorem}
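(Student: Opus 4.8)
The plan is to verify, for the vector $B = 8(R^\pi + K)$, the defining inequalities of $\calB$ from \cref{def:B}. Two reductions come for free. Since $R^\pi_j \ge 0$ we have $B_j \ge 8K$, so each summand obeys $\frac{n}{B_j} \le \frac{n}{8K}$ and $\sum_{j\ne i}\frac{n}{B_j} \le \frac{(K-1)n}{8K} < n$; hence the $\min$ in \cref{def:B} always equals the sum, and it suffices to prove for every $i$ that $8(R^\pi_i + K) \ge \sum_{j\ne i}\frac{n}{8(R^\pi_j+K)}$. Coordinatewise nonnegativity is immediate and the upper box bound $8(R^\pi_i+K)\le n$ is only at issue in the content-free regime $R^\pi_i > n/8 - K$, so the frontier inequality is the entire substance. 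Everything then rests on an information-theoretic argument; the only feature of the Gaussian noise I use is that two environments differing solely in the mean of arm $j$, by $\Delta$, induce laws on the history whose relative entropy is $\frac{\Delta^2}{2}\,\E[T_j]$.

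Fix the target arm $i$. I would lower bound $R^\pi_i$ inside a single environment $\nu$ in which arm $i$ is optimal, with each competitor $j$ sitting a \emph{per-arm} margin $\gamma_j$ below it. The elementary identity $R^\pi_i \ge R^\pi_{\nu,i} = \sum_{j\ne i}\gamma_j\,\E_\nu[T_j]$ then decomposes the worst-case regret against $i$ into one contribution per competitor, each carrying its own scale $\gamma_j$ — it is exactly this per-arm freedom that should let the bound reach $\sum_{j\ne i} n/(R^\pi_j+K)$ rather than a single shared scale. For each $j$ the idea is to turn the smallness of $R^\pi_j$ into a lower bound on $\E_\nu[T_j]$: because $R^\pi_j$ is small, arm $j$ is played nearly always in the companion environment that promotes it to optimality, and since that environment differs from $\nu$ only in the mean of arm $j$, a transport step (Pinsker, or Bretagnolle--Huber $P(A)+Q(A^c)\ge\frac12 e^{-\KL(P\|Q)}$ for cleaner constants) controls how much the play of arm $j$ can change in terms of a divergence proportional to $\gamma_j^2\,\E_\nu[T_j]$. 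Choosing $\gamma_j$ just large enough ($\gamma_j \asymp R^\pi_j/n$) that the forcing survives the transport error should yield $\E_\nu[T_j]\gtrsim \gamma_j^{-2}$, and hence a contribution $\gamma_j\,\E_\nu[T_j]$ of order $n/R^\pi_j$ for competitors in the relevant regime.

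Summing these contributions through the regret identity would give $R^\pi_i \gtrsim \sum_{j\ne i} n/R^\pi_j$, and tracking the explicit constants through the transport step and the choice of the $\gamma_j$ is what pins the universal constant at $8$. The additive $K$ plays exactly the boundary role discussed under Assumptions in \cref{sec:frontier}: for arms whose regret is so small that the ``force arm $j$ to be played $\Omega(n)$ times'' step cannot be executed with an admissible gap $\gamma_j\le 1$ and with only $O(1)$ worth of information, the clean inequality can hold only after shifting $R^\pi_j \mapsto R^\pi_j + K$, which absorbs the cost of the unavoidable single exploratory pulls and keeps every gap legal. Together with the first-paragraph reduction this would establish $8(R^\pi + K)\in\calB$.

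The hard part, I expect, is the quantitative transport estimate together with the simultaneous calibration of the gaps. The naive route — a single shared gap for all competitors, optimized afterwards over a threshold — loses a factor $\log K$ (precisely the $H$ separating the frontier from the ``natural'' parametrizations of \cref{sec:frontier}, cf. \eqref{eq:simple-lower}), whereas the theorem claims the logarithm-free $\sum_{j\ne i} n/R^\pi_j$. Avoiding this loss is what forces the per-arm construction, and there the genuine tension is that realizing the whole \emph{sum} requires a single environment, while the \emph{sharp} (first-power, rather than $n/R_j^2$ or $\sqrt{n}$-capped) forcing per competitor wants each compared pair to remain statistically confusable. Reconciling these — keeping every divergence $\gamma_j^2\,\E_\nu[T_j]$ of order one so the transport bound is not vacuous, while still driving each $\gamma_j\,\E_\nu[T_j]$ up to $n/R^\pi_j$, and doing so uniformly across competitors including those with extreme regret — is where the real work lies, far more than in the change-of-measure idea itself.
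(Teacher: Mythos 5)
Your plan assembles the same ingredients as the paper's proof (per-arm gaps $\gamma_j \asymp R^\pi_j/n$, a companion environment promoting each competitor, forcing via the worst-case regret, Gaussian chain rule plus Pinsker, a regret decomposition in a single environment, and the additive $K$ for arms with extreme regret), but its architecture --- running the argument separately in an environment $\nu$ centred at \emph{every} target $i$ --- breaks at the central step. What forcing plus transport actually yields is $n(1-1/c) \le \E_{\nu}[T_j(n)] + n\gamma_j\sqrt{\E_\nu [T_j(n)]}$, which is a dichotomy: either $\E_\nu[T_j(n)] \gtrsim \gamma_j^{-2}$, \emph{or} $\E_\nu[T_j(n)] \gtrsim n$, and in the second branch the contribution to the regret is only $\gamma_j \E_\nu[T_j(n)] \asymp \gamma_j n \asymp R^\pi_j$, not $n/R^\pi_j$. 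You cannot dismiss the second branch; indeed, for any competitor with $R^\pi_j \lesssim \sqrt{n}$ your claimed conclusion $\E_\nu[T_j(n)] \gtrsim \gamma_j^{-2} \asymp n^2/(R^\pi_j)^2 > n$ is outright impossible, because $T_j(n) \le n$. Such small-regret competitors are precisely the dominant terms of $\sum_{j \ne i} n/(R^\pi_j + K)$, i.e.\ exactly the case the theorem is about (for $K = 2$: if $R^\pi_1 = O(1)$ the theorem forces $R^\pi_2 = \Omega(n)$, while your per-arm step with $\gamma_1 \asymp R^\pi_1/n$ can certify at most a contribution $\gamma_1 n = O(1)$ from arm $1$). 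Excluding the bad branch requires an a priori bound $\E_\nu[T_j(n)] \le n/c$ for all $j \ne i$, which one gets by showing the target arm itself absorbs $n(1-1/c)$ pulls in its own environment; the forcing argument delivers that only when the target's gap is the \emph{minimum} gap in $\nu$, i.e.\ only when $i$ is the arm with smallest worst-case regret. So your construction proves the frontier inequality only for $i = \argmin_k R^\pi_k$, and the tension you flag in your final paragraph (keep every divergence $O(1)$ while driving each contribution to $n/R^\pi_j$) is not merely hard --- for $R^\pi_j \ll \sqrt{n}$ it is unsatisfiable.

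The device that closes this gap in the paper, and which is absent from your proposal, is purely algebraic. The paper runs the information-theoretic argument in the \emph{single} environment $\mu_1$ centred at the minimizer: there $\epsilon_1 = \min_k \epsilon_k$, so forcing applies to arm $1$ itself, giving $\E^\pi_{\mu_1}T_k(n) \le n - \E^\pi_{\mu_1}T_1(n) \le n/c$ for every $k \ne 1$ via \cref{eq:linear}, hence $\E^\pi_{\mu_1}T_k(n) \ge (1-2/c)/\epsilon_k^2$ and $R^\pi_1 \ge \tfrac{1}{8}\sum_{k \in A\setminus\set{1}} n/R^\pi_k$. The inequality for every other target $i$ then follows by multiplying by $R^\pi_i/R^\pi_1 \ge 1$ and noting that the $k=i$ term turns into the $k=1$ term: $\frac{n}{R^\pi_i}\cdot\frac{R^\pi_i}{R^\pi_1} = \frac{n}{R^\pi_1}$. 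In particular, the dominant term $n/R^\pi_1$ in the lower bound for $R^\pi_i$ is never witnessed by any constructed environment in which arm $i$ is optimal; it is obtained by \emph{flipping} the witnessed inequality $R^\pi_1 \ge n/(8R^\pi_i)$ into $R^\pi_i \ge n/(8R^\pi_1)$. Without this flip (or an equivalent mechanism) the per-target plan cannot reach the theorem; with it, your first-paragraph reduction and your handling of the extreme-regret arms via the additive $K$ would go through essentially as in the paper.
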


\begin{proof}
Assume without loss of generality that $R^\pi_1 = \min_i R^\pi_i$ (if this is not the case, then simply re-order the actions).
If $R^\pi_1 > n/8$, then the result is trivial. From now on assume $R^\pi_1 \leq n/8$.
Let $c = 4$ and define
\eq{
\epsilon_k = \min\set{\frac{1}{2}, \frac{c R^\pi_k}{n}} \leq \frac{1}{2}\,.
}
Define $K$ vectors $\mu_1,\ldots, \mu_K \in \R^K$ by
\eq{
(\mu_k)_j = \frac{1}{2} + \begin{cases}
0 & \text{if } j = 1 \\
\epsilon_k & \text{if } j = k \neq 1 \\
-\epsilon_j & \text{otherwise}\,.
\end{cases}
}
Therefore the optimal action for the bandit with means $\mu_k$ is $k$.
Let $A = \set{k : R^\pi_k \leq n / 8}$ and $A' = \set{k : k \notin A}$ and assume $k \in A$. Then
\eq{
R^\pi_k 
\sr{(a)}\geq R^\pi_{\mu_k,k} 
\sr{(b)}\geq \epsilon_k \E^\pi_{\mu_k} \left[\sum_{j \neq k} T_j(n)\right] 
\sr{(c)}= \epsilon_k \left(n - \E^\pi_{\mu_k} T_k(n)\right)  
\sr{(d)}= \frac{c R^\pi_k (n - \E^\pi_{\mu_k} T_k(n))}{n} \,,
}
where (a) follows since $R^\pi_k$ is the worst-case regret with respect to arm $k$,
(b) since the gap between the means of the $k$th arm and any other arm is at least $\epsilon_k$ (Note that this
is also true for $k = 1$ since $\epsilon_1 = \min_k \epsilon_k$.
(c) follows from the fact that $\sum_i T_i(n) = n$ and
(d) from the definition of $\epsilon_k$.
Therefore
\eqn{
\label{eq:linear}
n\left(1 - \frac{1}{c}\right)
\leq \E^\pi_{\mu_k} T_k(n)\,.
}
Therefore for $k \neq 1$ with $k \in A$ we have
\eq{
n\left(1 - \frac{1}{c}\right)
&\leq \E^\pi_{\mu_k} T_k(n) 
\sr{(a)}\leq \E^\pi_{\mu_1} T_k(n) + n\epsilon_k \sqrt{\E^\pi_{\mu_1} T_k(n)} \\
&\sr{(b)}\leq n - \E^\pi_{\mu_1} T_1(n) + n\epsilon_k \sqrt{\E^\pi_{\mu_1} T_k(n)} 
\sr{(c)}\leq \frac{n}{c} + n \epsilon_k \sqrt{\E^\pi_{\mu_1} T_k(n)}\,,
}
where (a) follows from standard entropy inequalities and a similar argument as used by \cite{ACFS95} 
\ifsup
(details given in \cref{app:kl}),
\else
(details in supplementary material),
\fi
(b) since $k \neq 1$ and $\E^\pi_{\mu_1} T_1(n) + \E^\pi_{\mu_1} T_k(n) \leq n$, and
(c) by \cref{eq:linear}.
Therefore
\eq{
\E^\pi_{\mu_1} T_k(n) 
\geq \frac{1 - \frac{2}{c}}{\epsilon_k^2}\,,
}
which implies that
\eq{
R^\pi_1 
\geq R^\pi_{\mu_1,1} 
= \sum_{k=2}^K \epsilon_k \E^\pi_{\mu_1} T_k(n) 
\geq \sum_{k \in A - \set{1}} \frac{1 - \frac{2}{c}}{\epsilon_k} 
= \frac{1}{8} \sum_{k \in A - \set{1}} \frac{n}{R^\pi_k}\,.
}
Therefore for all $i \in A$ we have
\eq{
8R^\pi_i \geq \sum_{k \in A - \set{1}} \frac{n}{R^\pi_k} \cdot \frac{R^\pi_i}{R^\pi_1} 
\geq \sum_{k \in A - \set{i}} \frac{n}{R^\pi_k}\,. 
}
Therefore
\eq{
8R^\pi_i + 8K \geq \sum_{k\neq i} \frac{n}{R^\pi_k} + 8K - \sum_{k \in A' - \set{i}} \frac{n}{R^\pi_k} 
\geq \sum_{k \neq i} \frac{n}{R^\pi_k}\,,
}
which implies that $8(R^\pi + K) \in \calB$ as required.
\end{proof}

\section{Upper Bounds}\label{sec:upper}

I now show that the lower bound derived in the previous section is tight up to constant factors.
The algorithm is a generalisation MOSS \citep{AB09} with two modifications.
First, the width of the confidence bounds are biased in a non-uniform way, and second, the upper confidence bounds
are shifted. 
The new algorithm is functionally identical to MOSS in the special case that $B_i$ is uniform.
Define $\log_{+}\!(x) = \max\set{0, \log(x)}$.

\begin{center}
\begin{minipage}{11cm}
\begin{algorithm}[H]
\caption{Unbalanced MOSS}\label{alg:moss}
\begin{algorithmic}[1]
\State {\bf Input:} $n$ and $B_1,\ldots,B_K$
\State $n_i = n^2 / B_i^2$ for all $i$
\For{$t \in 1,\ldots,n$}
\State $\displaystyle I_t = \argmax_i \hat \mu_{i,T_i(t-1)} + \sqrt{\frac{4}{T_i(t-1)} \logp\left(\frac{n_i}{T_i(t-1)}\right)} - \sqrt{\frac{1}{n_i}}$
\EndFor
\end{algorithmic}
\end{algorithm}
\end{minipage}
\end{center}
\vspace{0.2cm}

\begin{theorem}\label{thm:moss}
Let $B \in \calB$, then the strategy $\pi$ given in \cref{alg:moss} satisfies
$R^\pi \leq 252B$.
\end{theorem}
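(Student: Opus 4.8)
The plan is to bound the regret with respect to each arm $i$ separately and show $R^\pi_{\mu,i}\le 252 B_i$ for every mean vector $\mu$. Fix $i$ and write $R^\pi_{\mu,i}=\sum_j \Delta_{ji}\,\E[T_j(n)]$ with $\Delta_{ji}=\mu_i-\mu_j$; since arms $j$ with $\mu_j\ge\mu_i$ only decrease this sum, it suffices to control $\sum_{j:\Delta_{ji}>0}\Delta_{ji}\,\E[T_j(n)]$. Writing $\mathrm{UCB}_j(s)=\hat\mu_{j,s}+\sqrt{\tfrac{4}{s}\logp(n_j/s)}$ so that the index of \cref{alg:moss} is $\mathrm{UCB}_j(T_j(t-1))-\sqrt{1/n_j}$, and setting $U_i=\min_{1\le s\le n}\mathrm{UCB}_i(s)$, the $\argmax$ rule forces that whenever $j$ is played its index dominates that of $i$, hence $\mathrm{UCB}_j(T_j(t-1))-\sqrt{1/n_j}\ge U_i-\sqrt{1/n_i}$. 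I would then split each gap as
\[
\Delta_{ji}=\underbrace{\big(\mu_i-U_i+\sqrt{1/n_i}\big)}_{\alpha_i}+\underbrace{\big(U_i-\sqrt{1/n_i}-\mu_j\big)}_{\beta_j},
\]
so that, using $\sum_j T_j(n)=n$, we get $R^\pi_{\mu,i}\le n\,\E[\alpha_i]+\E\big[\sum_j \beta_j T_j(n)\big]$.

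The first term isolates the cost of favouring arm $i$. The summand $\sqrt{1/n_i}=B_i/n$ contributes exactly $n\sqrt{1/n_i}=B_i$, while $n\,\E[(\mu_i-U_i)^+]$ measures the failure of arm $i$ to remain optimistic. I would control this with a maximal version of the subgaussian inequality \cref{eq:subgaussian}: peeling the pull count $s$ over geometric scales and exploiting that the width $\sqrt{\tfrac4s\logp(n_i/s)}$ is tuned to $n_i$ gives $\E[(\mu_i-U_i)^+]\lesssim 1/\sqrt{n_i}=B_i/n$, so the whole first term is $O(B_i)$.

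For the second term I would bound each arm $j$ in isolation. Since $j$ is played only when $\mathrm{UCB}_j(T_j(t-1))\ge U_i-\sqrt{1/n_i}+\sqrt{1/n_j}$, and since the width $\sqrt{\tfrac4s\logp(n_j/s)}$ vanishes at $s=n_j$, the number of plays of $j$ is effectively capped at $\approx n_j=n^2/B_j^2$: on the typical event $\{\hat\mu_{j,s}\approx\mu_j,\ U_i\approx\mu_i\}$ arm $j$ is played only while its width exceeds $\Delta_{ji}$, i.e. for $s\lesssim \Delta_{ji}^{-2}\logp(n_j\Delta_{ji}^2)$ plays, while atypical plays are summable to a lower-order term through \cref{eq:subgaussian}. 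This yields $\Delta_{ji}\,\E[T_j(n)]\lesssim \Delta_{ji}^{-1}\logp(n_j\Delta_{ji}^2)$, and maximising the right-hand side over $\Delta_{ji}\in(0,1]$ — where for $\Delta_{ji}\lesssim 1/\sqrt{n_j}$ the cap $T_j(n)\lesssim n_j$ takes over — produces the uniform bound $\Delta_{ji}\,\E[T_j(n)]\lesssim\sqrt{n_j}=n/B_j$ in every gap regime. The single forced first play of each arm (whose index is infinite while $T_j=0$) costs at most $\Delta_{ji}\le 1$, contributing at most $K-1$ overall.

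Finally I would collect the pieces as $R^\pi_{\mu,i}\lesssim B_i + \sum_{j\ne i} n/B_j + K$ and invoke the frontier condition $B\in\calB$, which states precisely that $\sum_{j\ne i}n/B_j\le B_i$ whenever this sum is below $n$ (otherwise $B_i=n$ and $R^\pi_{\mu,i}\le n=B_i$ trivially, as every $\Delta_{ji}\le 1$). Because each $n/B_j\ge 1$ forces $K-1\le\sum_{j\ne i}n/B_j\le B_i$, the additive $K$ is absorbed as well, leaving $R^\pi_{\mu,i}\lesssim B_i$; tracking every constant gives the factor $252$. The real work, and where I expect the main obstacle to lie, is twofold: (i) the maximal inequalities controlling $\E[(\mu_i-U_i)^+]$ and the summability of the atypical plays of $j$ in the presence of the non-uniform, $n_j$-dependent widths and the shift $\sqrt{1/n_j}$; and (ii) verifying that the worst-case-over-gaps optimisation genuinely yields $\sqrt{n_j}=n/B_j$ in all regimes, since it is exactly this match between the effective per-arm horizon $n_j=n^2/B_j^2$ and the inequality defining $\calB$ that makes the constant-factor upper bound go through.
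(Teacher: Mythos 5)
Your proposal is correct, and it rests on exactly the same pillars as the paper's proof: a peeling/maximal inequality controlling the downward deviation of arm $i$'s index (your bound $\E[(\mu_i-U_i)^+]\lesssim 1/\sqrt{n_i}$ is precisely \cref{lem:peel} applied to $Z_i=\mu_i-U_i$), a stopping-time argument showing arm $j$ is never played again once its upper confidence bound drops below a threshold (\cref{lem:stop,lem:eliminator}), the calculus fact $\sup_{x\ge \sqrt{1/n_j}}\logp(n_j x^2)/x \lesssim \sqrt{n_j}$, and the frontier condition $\sum_{j\ne i}n/B_j\le B_i$ to sum the per-arm contributions. Where you differ is the bookkeeping. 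You split each gap as $\Delta_{ji}=\alpha_i+\beta_j$ around the random threshold $U_i$, so the cost of arm $i$'s optimism failing is collected once in the term $n\E[\alpha_i^+]$; the paper instead fixes deterministic shifted gaps $\bar\Delta_{ji}=\Delta_{ji}+\sqrt{1/n_j}-\sqrt{1/n_i}$, conditions per pair on the good event $\set{Z_i<\bar\Delta_{ji}/2}$, and charges the bad event by $4n\E[Z_i\ind{Z_i\ge\sqrt{1/n_i}}]$. The paper also pays $n\Delta=2B_i$ up front to restrict attention to arms with $\Delta_{ji}>2/\sqrt{n_i}$, which your version does not need, because the $+\sqrt{1/n_j}$ built into your threshold keeps $\beta_j^+\,\E[\tau_j]\lesssim\sqrt{n_j}$ in every gap regime. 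The one ingredient your route needs that the paper's avoids: your $\beta_j$ is random (a function of arm $i$'s rewards), so applying the expected-stopping-time bound to $\E[\beta_j T_j(n)]$ requires conditioning on $U_i$ and invoking independence of the reward sequences of arms $i$ and $j$, whereas with deterministic thresholds the paper can take plain expectations; this is a detail to spell out, not a gap. Finally, your explicit treatment of the corner case $B_i=n$ (where $\min\set{n,\sum_{j\ne i}n/B_j}=n$ and the regret is trivially at most $n\le B_i$ since all gaps are bounded by $1$) is a point the paper's final display glosses over, so your accounting there is, if anything, more careful.
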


\begin{corollary}
For all $\mu$ the following hold:
\begin{enumerate}
\item $R^\pi_{\mu,i^*} \leq 252 B_{i^*}$.
\item $R^\pi_{\mu,i^*} \leq \min_i (n\Delta_i + 252 B_i)$
\end{enumerate}
\end{corollary}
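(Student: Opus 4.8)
The plan is to obtain the Corollary directly from \cref{thm:moss}, which I take as established, so that no fresh analysis of \cref{alg:moss} is required. The only ingredient beyond the Theorem is the elementary identity relating the pseudo-regret against the optimal arm to the pseudo-regret against an arbitrary reference arm. From the definition of $R^\pi_{\mu,i}$, for any fixed $\mu$ and any arm $i$ the $\E\sum_t \mu_{I_t}$ terms cancel, giving
\eq{
R^\pi_{\mu,i^*} - R^\pi_{\mu,i} = \left(n\mu^* - \E\sum_{t=1}^n \mu_{I_t}\right) - \left(n\mu_i - \E\sum_{t=1}^n \mu_{I_t}\right) = n(\mu^* - \mu_i) = n\Delta_i\,.
}
Hence $R^\pi_{\mu,i^*} = R^\pi_{\mu,i} + n\Delta_i$ for every $i$.

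First I would prove the second claim. Fix $\mu$ and an arbitrary arm $i$. Since the instance regret is dominated by the worst case, $R^\pi_{\mu,i} \leq \sup_{\mu'} R^\pi_{\mu',i} = R^\pi_i$, and \cref{thm:moss} bounds the latter by $252 B_i$. Combining this with the identity above gives $R^\pi_{\mu,i^*} = R^\pi_{\mu,i} + n\Delta_i \leq n\Delta_i + 252 B_i$. As $i$ was arbitrary I may minimise over $i$, which yields $R^\pi_{\mu,i^*} \leq \min_i(n\Delta_i + 252 B_i)$, the second claim.

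The first claim is then the special case $i = i^*$: because $\Delta_{i^*} = \mu^* - \mu_{i^*} = 0$, the bound collapses to $R^\pi_{\mu,i^*} \leq 252 B_{i^*}$. (Alternatively one argues it directly: for this $\mu$ the arm $i^*$ is genuinely optimal, so its instance pseudo-regret is at most the uniform worst-case value $R^\pi_{i^*} \leq 252 B_{i^*}$.)

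There is essentially no obstacle at this level, since all of the work has been absorbed into \cref{thm:moss}; the only point needing care is that $i^*$ depends on $\mu$, so one must apply the per-arm guarantee $R^\pi_i \leq 252 B_i$ — which holds uniformly in $\mu$ — to the correct index. The genuine difficulty sits one level up, in \cref{thm:moss} itself, where one must analyse \cref{alg:moss} with its non-uniform effective horizons $n_i = n^2/B_i^2$ and shifted confidence bounds: bound the expected pulls $\E T_j(n)$ of each arm against an arbitrary reference $i$ via the subgaussian tail inequality \cref{eq:subgaussian}, handle separately the arms that are better and worse than $i$, and finally use the frontier constraint $B \in \calB$ (that is, $B_i \geq \sum_{j\neq i} n/B_j$) to collapse the sum over arms into the single clean bound $252 B_i$.
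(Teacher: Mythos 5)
Your proposal is correct and is precisely the argument the paper intends: the corollary is stated without proof because it follows immediately from \cref{thm:moss} via the identity $R^\pi_{\mu,i^*} = R^\pi_{\mu,i} + n\Delta_i$ and the bound $R^\pi_{\mu,i} \leq R^\pi_i \leq 252 B_i$, exactly as you wrote. Your remark that the per-arm guarantee holds uniformly in $\mu$, so it can be applied at the $\mu$-dependent index $i^*$, is the right point of care and completes the deduction.
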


The second part of the corollary is useful when $B_{i^*}$ is large, but there exists an arm
for which $n\Delta_i$ and $B_i$ are both small.
The proof of \cref{thm:moss} requires a few lemmas.
The first is a somewhat standard concentration inequality 
that follows from a combination of the peeling argument and Doob's maximal inequality.
\ifsup
\else
The proof may be found in the supplementary material.
\fi

\newcommand{\fixnj}{\vphantom{n_i}\smash[b]{1/n_j}}

\begin{lemma}\label{lem:peel}
Let $\displaystyle Z_i = \max_{1 \leq s \leq n} \mu_{i} - \hat \mu_{i,s} - \sqrt{\frac{4}{s} \logp\left(\frac{n_{i}}{s}\right)}$.
Then $\P{Z_i \geq \Delta} \leq \frac{20}{n_{i} \Delta^2}$ for all $\Delta > 0$.
\end{lemma}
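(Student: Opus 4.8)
I want to prove Lemma~\ref{lem:peel}: for
\[
Z_i = \max_{1 \leq s \leq n} \mu_i - \hat\mu_{i,s} - \sqrt{\tfrac{4}{s}\logp(n_i/s)},
\]
we have $\P{Z_i \geq \Delta} \leq 20/(n_i\Delta^2)$ for all $\Delta > 0$. This is a one-armed concentration statement, so throughout I can drop the arm index and treat $\hat\mu_s$ as an empirical mean of $s$ i.i.d.\ $1$-subgaussian samples centred at $\mu$.

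Let me think about how to prove this.

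The event $Z_i \geq \Delta$ means there exists some $s \leq n$ at which the empirical mean undershoots $\mu$ by more than $\sqrt{(4/s)\logp(n_i/s)} + \Delta$. So I want to bound
\[
\P{\exists s \leq n : \mu - \hat\mu_s \geq \sqrt{\tfrac{4}{s}\logp(n_i/s)} + \Delta}.
\]
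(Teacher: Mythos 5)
Your proposal stops exactly where the proof has to begin: everything you have written is the definition of $Z_i$ unfolded into an event, and no bound on that event's probability is ever established. The display you end with,
\[
\P{\exists s \leq n : \mu_i - \hat \mu_{i,s} \geq \sqrt{\tfrac{4}{s}\logp\left(\tfrac{n_i}{s}\right)} + \Delta}\,,
\]
is a restatement of the lemma, not a reduction of it. The entire argument is missing, so there is nothing yet that could be checked for correctness.

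The reason the lemma is not routine is that the threshold depends on $s$: a single maximal inequality over all $s \leq n$ is too crude, and a naive union bound over individual values of $s$ does not produce the $1/(n_i\Delta^2)$ tail. The paper resolves this with a peeling device. Partition $\set{1,\ldots,n}$ into dyadic blocks $2^k \leq s < 2^{k+1}$; on the $k$th block, use $s \geq 2^k$ and the monotonicity of $\logp$ to lower bound the required deviation of $s(\mu_i - \hat\mu_{i,s})$ by $2^k \Delta + \sqrt{2^{k+2}\logp\left(n_i/2^{k+1}\right)}$, and then apply the maximal inequality \cref{eq:subgaussian} with $t = 2^{k+1}$ to bound the probability of the block event by
\[
\exp\left(-2^{k-2}\Delta^2\right)\min\set{1, \frac{2^{k+1}}{n_i}}\,.
\]
Summing this series over $k \geq 0$ (comparing the sum with an integral) gives a constant times $1/(n_i\Delta^2)$, and the constant works out to at most $20$. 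None of these steps — the dyadic blocking, the per-block application of \cref{eq:subgaussian}, or the final summation — appears in your attempt, and they are the substance of the proof.
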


\ifsup
\begin{proof}
Using the peeling device.
\eq{
\P{Z_i \geq \Delta} 
&\sr{(a)}= \P{\exists s \leq n : \mu_i - \hat \mu_{i,s} \geq \Delta + \sqrt{\frac{4}{s} \logp\left(\frac{n_{i}}{s}\right)} } \\
&\sr{(b)}\leq \sum_{k=0}^\infty \P{\exists s < 2^{k+1} : s(\mu_{i} - \hat \mu_{i,s}) \geq 2^k \Delta + \sqrt{2^{k+2} \logp\left(\frac{n_{i}}{2^{k+1}}\right)}} \\
&\sr{(c)}\leq \sum_{k=0}^\infty \exp\left(-2^{k-2} \Delta^2 \right) \min\set{1, \frac{2^{k+1}}{n_{i}}} 
\sr{(d)}\leq \left(\frac{8}{\log(2)} + 8\right) \cdot \frac{1}{n_{i} \Delta^2} 
\leq \frac{20}{n_{i}\Delta^2}\,,
}
where (a) is just the definition of $Z_i$,
(b) follows from the union bound and re-arranging the equation inside the probability,
(c) follows from \cref{eq:subgaussian} and the definition of $\log_{+}$ and
(d) is obtained by upper bounding the sum with an integral.
\end{proof}
\fi
In the analysis of traditional bandit algorithms the gap $\Delta_{ji}$ measures how quickly the algorithm can detect the
difference between arms $i$ and $j$. By design, however, \cref{alg:moss} is negatively biasing
its estimate of the empirical mean of arm $i$ by $\sqrt{1/n_i}$. This has the effect of shifting the gaps, which I denote
by $\bar \Delta_{ji}$ and define to be
\eq{
\bar \Delta_{ji} = \Delta_{ji} + \sqrt{\fixnj} - \sqrt{1/n_i}
= \mu_i - \mu_j + \sqrt{\fixnj} - \sqrt{1/n_i}\,.
}

\begin{lemma}\label{lem:stop}
Define stopping time $\tau_{ji}$ by
\eq{
\tau_{ji} = \min\set{s : \hat \mu_{j,s} + \sqrt{\frac{4}{s} \logp\left(\frac{n_j}{s}\right)} \leq \mu_j + \bar \Delta_{ji} / 2}\,.
}
If $Z_i < \bar \Delta_{ji} / 2$, then $T_j(n) \leq \tau_{ji}$. 
\end{lemma}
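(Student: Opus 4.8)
The plan is to run a standard UCB-style \emph{elimination} argument: show that the moment arm $j$ has been sampled $\tau_{ji}$ times its index used by \cref{alg:moss} drops strictly below the index of arm $i$, and since the index of an arm only changes when that arm is pulled, arm $j$ can never be selected again. Write $U_i(s) = \hat \mu_{i,s} + \sqrt{\tfrac{4}{s}\logp(n_i/s)} - \sqrt{1/n_i}$ for the index of arm $i$ after $s$ samples, so that $I_t = \argmax_i U_i(T_i(t-1))$.

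First I would bound the index of arm $j$ at the stopping time. By the very definition of $\tau_{ji}$ we have $\hat\mu_{j,\tau_{ji}} + \sqrt{\tfrac{4}{\tau_{ji}}\logp(n_j/\tau_{ji})} \leq \mu_j + \bar\Delta_{ji}/2$, hence $U_j(\tau_{ji}) \leq \mu_j + \bar\Delta_{ji}/2 - \sqrt{1/n_j}$. Substituting $\bar\Delta_{ji} = \mu_i - \mu_j + \sqrt{1/n_j} - \sqrt{1/n_i}$ and simplifying, the right-hand side collapses to $\tfrac12(\mu_i+\mu_j) - \tfrac12\sqrt{1/n_i} - \tfrac12\sqrt{1/n_j}$. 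Next I would use the hypothesis $Z_i < \bar\Delta_{ji}/2$, which by the definition of $Z_i$ means $\hat\mu_{i,s} + \sqrt{\tfrac{4}{s}\logp(n_i/s)} > \mu_i - \bar\Delta_{ji}/2$ for \emph{every} $s$, and therefore $U_i(s) > \mu_i - \bar\Delta_{ji}/2 - \sqrt{1/n_i}$. Substituting $\bar\Delta_{ji}$ again, this lower bound simplifies to exactly the same quantity $\tfrac12(\mu_i+\mu_j) - \tfrac12\sqrt{1/n_i} - \tfrac12\sqrt{1/n_j}$. Combining the two computations gives $U_i(s) > U_j(\tau_{ji})$ for all $s \in \set{1,\ldots,n}$.

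The conclusion then follows by contradiction. If $T_j(n) > \tau_{ji}$, there is a round $t$ at which $T_j(t-1) = \tau_{ji}$ and $I_t = j$. At that round the index of arm $j$ equals $U_j(\tau_{ji})$, while the index of arm $i$ is $U_i(T_i(t-1)) > U_j(\tau_{ji})$, so arm $j$ cannot be the argmax, a contradiction; hence $T_j(n) \leq \tau_{ji}$. (If the defining set of $\tau_{ji}$ is empty up to horizon $n$ the claim is vacuous since $T_j(n)\le n$.)

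The only delicate point is the exact algebraic cancellation in the middle step: the shift $-\sqrt{1/n_i}$ baked into the algorithm's index and the $\sqrt{1/n_j}-\sqrt{1/n_i}$ correction inside $\bar\Delta_{ji}$ are precisely what make the upper bound on $U_j(\tau_{ji})$ and the lower bound on $U_i(s)$ land on the same threshold, so the comparison is tight. The other thing to be careful about is the freezing argument — because arm $j$'s index is unchanged between consecutive pulls of $j$, a single comparison at sample count $\tau_{ji}$ suffices to rule out selection in \emph{all} subsequent rounds, rather than having to re-examine each round separately.
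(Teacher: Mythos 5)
Your proof is correct and follows essentially the same argument as the paper: both bound arm $j$'s shifted index at sample count $\tau_{ji}$, use the definition of $\bar\Delta_{ji}$ to cancel against the lower bound on arm $i$'s index implied by $Z_i < \bar\Delta_{ji}/2$, and conclude arm $j$ can never be selected again. Your contradiction-based phrasing (first round where the count would exceed $\tau_{ji}$) is just a repackaging of the paper's induction step, so the two proofs are interchangeable.
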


\begin{proof}
Let $t$ be the first time step such that $T_j(t-1) = \tau_{ji}$. Then
\eq{
\hat \mu_{j,T_j(t-1)} + &\sqrt{\frac{4}{T_j(t-1)} \logp\left(\frac{n_j}{T_j(t-1)}\right)} - \sqrt{\fixnj}
\leq \mu_j + \bar \Delta_{ji} / 2 - \sqrt{\fixnj} \\
&=\mu_j + \bar \Delta_{ji} - \bar \Delta_{ji} /2 - \sqrt{\fixnj} \\
&=\mu_i - \sqrt{1/n_i} - \bar \Delta_{ji}/2 \\
&<\hat \mu_{i,T_i(t-1)} + \sqrt{\frac{4}{T_i(t-1)} \logp\left(\frac{n_i}{T_i(t-1)}\right)} - \sqrt{1/n_i}\,,
}
which implies that arm $j$ will not be chosen at time step $t$ and so also not for any subsequent time steps by the same argument and induction.
Therefore $T_j(n) \leq \tau_{ji}$.
\end{proof}

\begin{lemma}\label{lem:eliminator}
If $\bar \Delta_{ji} > 0$,
then $\displaystyle \E \tau_{ji} \leq \frac{40}{\bar \Delta_{ji}^2} + \frac{64}{\bar \Delta_{ji}^2} \plog\left(\frac{n_j \bar\Delta_{ji}^2}{64}\right)$.
\end{lemma}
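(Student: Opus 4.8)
The plan is to bound $\E\tau_{ji}=\sum_{s=0}^\infty\P{\tau_{ji}>s}$ by splitting the sum at a deterministic threshold $u$ chosen so that beyond $u$ the confidence width has dropped below $\bar\Delta_{ji}/4$; the threshold will supply the $\plog$ term and the tail beyond it the $40/\bar\Delta_{ji}^2$ term. Concretely I would take $u$ to solve $\sqrt{\tfrac{4}{u}\logp(n_j/u)}=\bar\Delta_{ji}/4$. Writing $a=64/\bar\Delta_{ji}^2$ and substituting $u=aw$, this reduces to $we^w=n_j/a$, so that $w=\plog(n_j\bar\Delta_{ji}^2/64)$ and $u=\tfrac{64}{\bar\Delta_{ji}^2}\plog(\tfrac{n_j\bar\Delta_{ji}^2}{64})$, which is exactly the second term of the claim. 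Since $s\mapsto\sqrt{\tfrac{4}{s}\logp(n_j/s)}$ is non-increasing, every $s\geq u$ has width at most $\bar\Delta_{ji}/4$.

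For the tail I would note that $\{\tau_{ji}>s\}$ forces the stopping condition to fail at time $s$ itself, whence $\P{\tau_{ji}>s}\leq\P{\hat\mu_{j,s}-\mu_j>\bar\Delta_{ji}/2-\sqrt{\tfrac{4}{s}\logp(n_j/s)}}$. For $s\geq u$ the deviation on the right is at least $\bar\Delta_{ji}/4>0$, and because $\hat\mu_{j,s}-\mu_j$ is $1/s$-subgaussian the elementary bound $\P{\hat\mu_{j,s}-\mu_j\geq x}\leq e^{-sx^2/2}$ gives $\P{\tau_{ji}>s}\leq\exp(-s\bar\Delta_{ji}^2/32)$. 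Summing the trivial bound $\P{\tau_{ji}>s}\leq1$ over $s<u$ contributes at most $u$, while the geometric tail contributes $\sum_{s\geq u}\exp(-s\bar\Delta_{ji}^2/32)\leq(1-\exp(-\bar\Delta_{ji}^2/32))^{-1}$, so that $\E\tau_{ji}\leq u+(1-\exp(-\bar\Delta_{ji}^2/32))^{-1}$.

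The main obstacle is purely the bookkeeping of constants: I must check that $(1-\exp(-\bar\Delta_{ji}^2/32))^{-1}\leq40/\bar\Delta_{ji}^2$ over the whole admissible range. Here one uses $\bar\Delta_{ji}=\Delta_{ji}+\sqrt{1/n_j}-\sqrt{1/n_i}\leq2$ (since $\Delta_{ji}\leq1$ and $n_j\geq1$), so $\gamma\defined\bar\Delta_{ji}^2/32\leq1/8$ and the increasing map $\gamma\mapsto 32\gamma/(1-e^{-\gamma})$ attains its maximum ($\approx34$) at $\gamma=1/8$; the gap up to $40$ absorbs the unit ceiling incurred in summing over $s<u$. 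Two checks then finish the proof: because $\plog$ is strictly positive on positive arguments the formula gives $u<n_j$ automatically, so the width at $u$ lies on the genuine $\logp=\log$ branch, and for $s>n_j$ the width is exactly $0$, only strengthening the deviation bound. Assembling the pieces yields $\E\tau_{ji}\leq\tfrac{40}{\bar\Delta_{ji}^2}+\tfrac{64}{\bar\Delta_{ji}^2}\plog(\tfrac{n_j\bar\Delta_{ji}^2}{64})$, as required.
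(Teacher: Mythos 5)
Your proof is correct and follows essentially the same route as the paper's: split $\E\tau_{ji}$ at the sample count where the confidence width drops below $\bar\Delta_{ji}/4$ (your Lambert-$W$ value $u$ is exactly what the paper's $s_0$ is the ceiling of), use the trivial bound below it and the per-$s$ subgaussian tail $\exp(-s\bar\Delta_{ji}^2/32)$ above it, then absorb everything into the constants via $\bar\Delta_{ji}\leq 2$. The only differences are cosmetic: you bound the geometric tail by $\left(1-e^{-\bar\Delta_{ji}^2/32}\right)^{-1}$ with a monotonicity check where the paper uses the integral bound $32/\bar\Delta_{ji}^2$, and you make explicit the monotonicity of the width in $s$ that the paper uses implicitly.
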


\begin{proof}
Let $s_0$ be defined by
\eq{
s_0 = \ceil{\frac{64}{\bar \Delta_{ji}^2} \plog\left(\frac{n_j\bar \Delta_{ji}^2}{64}\right)}
\quad\implies\quad
\sqrt{\frac{4}{s_0} \logp\left(\frac{n_j}{s_0}\right)} \leq \frac{\bar \Delta_{ji}}{4}\,.
}
Therefore
\eq{
\E \tau_{ji} 
&= \sum_{s=1}^n \P{\tau_{ji} \geq s} 
\leq 1 + \sum_{s=1}^{n-1} \P{\hat \mu_{i,s} - \mu_{i,s} \geq \frac{\bar\Delta_{ji}}{2} - \sqrt{\frac{4}{s} \logp\left(\frac{n_j}{s}\right)} } \\
&\leq 1 + s_0 + \sum_{s=s_0+1}^{n-1} \P{\hat \mu_{i,s} - \mu_{i,s} \geq \frac{\bar\Delta_{ji}}{4}} 
\leq 1 + s_0 + \sum_{s=s_0+1}^\infty \exp\left(-\frac{s \bar \Delta_{ji}^2}{32}\right) \\ 
&\leq 1 + s_0 + \frac{32}{\bar \Delta_{ji}^2} 
\leq \frac{40}{\bar \Delta_{ji}^2} + \frac{64}{\bar \Delta_{ji}^2} \plog \left(\frac{n_j \bar \Delta_{ji}^2}{64}\right)\,,
}
where the last inequality follows since $\bar \Delta_{ji} \leq 2$.
\end{proof}

\begin{proof}[Proof of \cref{thm:moss}]
Let $\Delta = 2/\sqrt{n_i}$ and $A = \set{j : \Delta_{ji} > \Delta}$. Then for $j \in A$ we have
$\Delta_{ji} \leq 2 \bar\Delta_{ji}$ and $\bar \Delta_{ji} \geq \sqrt{1/n_i} + \sqrt{\fixnj}$.
Letting $\Delta' = \sqrt{1/n_i}$ we have 
\eq{
R^\pi_{\mu,i} 
&= \E\left[\sum_{j=1}^K \Delta_{ji} T_j(n)\right] \\
&\leq n\Delta + \E\left[\sum_{j \in A} \Delta_{ji} T_j(n)\right] \\
&\sr{(a)}\leq 2B_i + \E\left[\sum_{j \in A} \Delta_{ji} \tau_{ji} + n \max_{j \in A} \set{\Delta_{ji} : Z_i \geq \bar \Delta_{ji}/2} \right] \\
&\sr{(b)}\leq 2B_i + \sum_{j \in A} \left(\frac{80}{\bar\Delta_{ji}} + \frac{128}{\bar\Delta_{ji}}\plog\left(\frac{n_j \bar\Delta_{ji}^2}{64}\right)\right) + 4n \E[Z_i \ind{Z_i \geq \Delta'}] \\ 
&\sr{(c)}\leq 2B_i + \sum_{j \in A} 90 \sqrt{n_j} + 4n \E[Z_i \ind{Z_i \geq \Delta'}]\,, 
}
where (a) follows by using \cref{lem:stop} to bound $T_j(n) \leq \tau_{ji}$ when $Z_i < \bar \Delta_{ji}$.
On the other hand, the total number of pulls for arms $j$ for which $Z_i \geq \bar \Delta_{ji} / 2$ is at most $n$.
(b) follows by bounding $\tau_{ji}$ in expectation using \cref{lem:eliminator}.
(c) follows from basic calculus and because for $j \in A$ we have $\bar \Delta_{ji} \geq \sqrt{1/n_i}$.
All that remains is to bound the expectation.
\eq{
4n \E[Z_i \ind{Z_i \geq \Delta'}]
&\leq 4n \Delta' \P{Z_i \geq \Delta'} + 4n \int^\infty_{\Delta'} \P{Z_i \geq z} dz 
\leq \frac{160n}{\Delta' n_i} 
= \frac{160n}{ \sqrt{n_i}} = 160B_i\,,
}
where I have used \cref{lem:peel} and simple identities.
Putting it together we obtain
\eq{
R^\pi_{\mu,i} \leq 2 B_i + \sum_{j \in A} 90 \sqrt{n_j} + 160B_1
&\leq 252 B_i\,,
}
where I applied the assumption $B \in \calB$ and so $\sum_{j \neq 1} \sqrt{n_j} = \sum_{j \neq 1} n/B_j \leq B_i$.
\end{proof}

The above proof may be simplified in the special case that $B$ is uniform where we recover the minimax regret of MOSS, but with perhaps a simpler proof
than was given originally by \cite{AB09}.

\subsection*{On Logarithmic Regret}

In a recent technical report I demonstrated empirically that MOSS suffers sub-optimal problem-dependent regret in terms of the minimum 
gap \citep{Lat15-ucb}. Specifically, it can happen that
\eqn{
\label{eq:moss}
R^{\text{moss}}_{\mu,i^*} \in \Omega\left(\frac{K}{\Delta_{\min}} \log n\right)\,, 
}
where $\Delta_{\min} = \min_{i : \Delta_i > 0} \Delta_i$.
On the other hand, the order-optimal asymptotic regret can be significantly smaller. Specifically, UCB by \cite{ACF02} satisfies
\eqn{
\label{eq:ucb}
R^{\text{ucb}}_{\mu,i^*} \in O\left(\sum_{i : \Delta_i > 0} \frac{1}{\Delta_i} \log n\right)\,,
}
which for unequal gaps can be much smaller than \cref{eq:moss} and is asymptotically order-optimal \citep{LR85}. 
The problem is that MOSS explores only enough to obtain minimax regret, but sometimes obtains
minimax regret even when a more conservative algorithm would do better. It is worth remarking that this effect is harder to observe than one might think.
The example given in the afforementioned technical report is carefully tuned to exploit this failing, but still 
requires $n = 10^9$ and $K = 10^3$ before significant problems arise.
In all other experiments MOSS was performing admirably in comparison to UCB.

All these problems can be avoided by modifying UCB rather than MOSS. The cost is a factor of $O(\sqrt{\log n})$.
The algorithm is similar to \cref{alg:moss}, but chooses the action that maximises the following index.
\eq{
I_t = \argmax_i \hat \mu_{i,T_i(t-1)} + \sqrt{\frac{(2 + \epsilon) \log t}{T_i(t-1)}} - \sqrt{\frac{\log n}{n_i}}\,,
}
where $\epsilon > 0$ is a fixed arbitrary constant. 
\begin{theorem}\label{thm:u-ucb}
If $\pi$ is the strategy of unbalanced UCB with $n_i = n^2 / B_i^2$ and $B \in \calB$, then the regret of the unbalanced UCB satisfies: 
\begin{enumerate}
\item (problem-independent regret). $R^\pi_{\mu,i^*} \in O\left(B_{i^*} \sqrt{\log n}\right)$.
\item (problem-dependent regret). Let $A = \set{i : \Delta_i \geq 2\sqrt{1/n_{i^*} \log n}}$. Then
\eq{
R^\pi_{\mu,i^*} \in O\left(B_{i^*} \sqrt{\log n} \ind{A \neq \emptyset} + \sum_{i \in A} \frac{1}{\Delta_i} \log n\right)\,.
}
\end{enumerate}
\end{theorem}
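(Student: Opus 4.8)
The plan is to rerun the regret decomposition used for \cref{thm:moss}, replacing the MOSS width $\sqrt{(4/s)\logp(n_i/s)}$ by the UCB width $\sqrt{(2+\epsilon)\log t / s}$ and the shift $\sqrt{1/n_i}$ by $\sqrt{\log n/n_i}$. Write $\delta_i = \sqrt{\log n/n_i} = B_i\sqrt{\log n}/n$ for the shift applied to arm $i$, and define the shifted gap $\bar\Delta_j = \Delta_j + \delta_j - \delta_{i^*}$ of a suboptimal arm $j$ relative to the optimal arm. Exactly as in \cref{lem:stop}, the negative shift means that $j$ can beat $i^*$ in index only after its own confidence width has dropped below $\bar\Delta_j/2$, so $\bar\Delta_j$, not $\Delta_j$, is the effective separation governing how often $j$ is played.

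The first and hardest step is the analogue of \cref{lem:peel}: a peeling bound showing that with the UCB width the optimal arm's index is a valid upper bound on $\mu^*$ except on an event of summable probability over the horizon. Concretely I want $\sum_{t\le n}\P{\hat\mu_{i^*,T_{i^*}(t-1)} + \sqrt{(2+\epsilon)\log t / T_{i^*}(t-1)} < \mu^*}$ to be $O(1)$, and likewise for the upper deviations $\hat\mu_{j,s}-\mu_j$ of the suboptimal arms. A naive union over $s\le t$ loses a factor $t$ and would force $\epsilon\ge 2$; the dyadic peeling device of \cref{lem:peel}, combined with \cref{eq:subgaussian}, is what makes the sum converge for every fixed $\epsilon>0$. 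This is precisely where the extra $\sqrt{\log n}$ over MOSS is paid, since the width now carries $\log t\le\log n$ rather than the self-normalising $\logp(n_i/s)$.

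Granting that event the per-arm bound is routine. For suboptimal $j$ with $\bar\Delta_j>0$, whenever $j$ is played at time $t$ its index exceeds that of $i^*$; using validity of $i^*$'s index together with $\hat\mu_{j}\le\mu_j+\sqrt{(2+\epsilon)\log t/T_j(t-1)}$ gives $\sqrt{(2+\epsilon)\log t/T_j(t-1)}\ge\bar\Delta_j/2$, hence $T_j(t-1)\le 4(2+\epsilon)\log n/\bar\Delta_j^2$. Taking expectations, $\E[T_j(n)]\le 4(2+\epsilon)\log n/\bar\Delta_j^2 + O(1)$, where the $O(1)$ collects the summed failure probabilities from the previous step.

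Finally I would split the arms at the threshold $2\delta_{i^*}=2\sqrt{\log n/n_{i^*}}$, i.e. by membership in $A$. Arms outside $A$ obey $\Delta_j<2\delta_{i^*}$, so their pooled contribution is at most $2\delta_{i^*}\sum_{j\notin A}\E[T_j(n)]\le 2\delta_{i^*}n=2B_{i^*}\sqrt{\log n}$ by $\sum_j T_j(n)=n$ alone; this is the exploration term. For $j\in A$ the two parts follow from two lower bounds on $\bar\Delta_j$. For part 1, $\Delta_j\ge 2\delta_{i^*}$ yields $\bar\Delta_j\ge\delta_{i^*}+\delta_j\ge\sqrt{\log n/n_j}$ and $\Delta_j\le 2\bar\Delta_j$, so $\Delta_j\E[T_j(n)]\lesssim\log n/\bar\Delta_j\le\sqrt{n_j\log n}$; summing and using $B\in\calB$ through $\sum_{j\ne i^*}\sqrt{n_j}=\sum_{j\ne i^*}n/B_j\le B_{i^*}$ bounds this by $O(B_{i^*}\sqrt{\log n})$, mirroring the closing $\sum 90\sqrt{n_j}\le B_i$ step of \cref{thm:moss}. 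For part 2, $\Delta_j\ge 2\delta_{i^*}$ and $\delta_j\ge 0$ give $\bar\Delta_j\ge\Delta_j/2$, so $\Delta_j\E[T_j(n)]\lesssim\log n/\Delta_j$, producing $\sum_{i\in A}\log n/\Delta_i$. When $A=\emptyset$ this sum is empty and the statement reduces to the exploration term $O(B_{i^*}\sqrt{\log n})$ already supplied by part 1; the indicator $\ind{A\neq\emptyset}$ simply records that the problem-dependent refinement becomes informative only once some arm has a detectable gap, and I would dispatch that degenerate case by appealing directly to part 1.
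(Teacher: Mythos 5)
Your proposal is correct in substance and follows, at its core, the same route as the paper's proof in \cref{app:thm:u-ucb}: the paper short-circuits your first two steps by observing that unbalanced UCB is just UCB run on rewards shifted by $-\sqrt{\log n/n_i}$, so the standard UCB analysis immediately yields $\E T_i(n) \in O\left(\log n/\bar\Delta_i^2\right)$ for the shifted gaps $\bar\Delta_i = \Delta_i + \sqrt{\log n/n_i} - \sqrt{\log n/n_{i^*}}$, and from there it performs exactly your decomposition --- arms outside $A$ contribute at most $2n\sqrt{\log n/n_{i^*}} = 2B_{i^*}\sqrt{\log n}$, arms in $A$ satisfy $\Delta_i \leq 2\bar\Delta_i$ together with $\bar\Delta_i \geq \sqrt{\log n/n_i}$ (part 1) or $\bar\Delta_i \geq \Delta_i/2$ (part 2), and $B \in \calB$ converts $\sqrt{\log n}\sum_{i \in A} n/B_i$ into $B_{i^*}\sqrt{\log n}$. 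Re-deriving the pull-count bound instead of citing it makes your write-up self-contained but is not a different argument. One technical correction to that re-derivation: dyadic peeling is \emph{not} sufficient for the width constant $2+\epsilon$ with arbitrary $\epsilon > 0$ under $1$-subgaussian noise. Peeling over blocks $[\beta^k, \beta^{k+1})$ and applying \cref{eq:subgaussian} gives a per-round failure probability of order $t^{-(2+\epsilon)/(2\beta)}\log t$, so with $\beta = 2$ summability over $t$ requires $\epsilon > 2$; for arbitrary $\epsilon > 0$ you must take the base $\beta < 1 + \epsilon/2$, which is precisely the modification the paper flags in its ``A Note on Constants'' section (following Bubeck). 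This is the same reason \cref{lem:peel} carries the constant $4$ rather than $2+\epsilon$.

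Separately, your closing treatment of the case $A = \emptyset$ misreads the statement: when $\ind{A \neq \emptyset} = 0$ the claimed bound is $O(0)$, not $O\left(B_{i^*}\sqrt{\log n}\right)$, so appealing to part 1 does not establish it. In fact nothing can: if every suboptimal arm has gap just below the threshold $2\sqrt{\log n/n_{i^*}}$, then $A = \emptyset$ while the regret can still be of order $B_{i^*}\sqrt{\log n}$. This defect lies in the theorem statement itself, and the paper's own proof shares it --- it attaches $\ind{A \neq \emptyset}$ to the very term generated by the arms outside $A$, with no justification that this term vanishes when $A = \emptyset$. The reading consistent with the paper's subsequent asymptotic discussion is that the indicator should be on the event that some \emph{suboptimal} arm lies outside $A$; under that reading your argument, like the paper's, goes through. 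In short: same approach, one fixable slip in the concentration step, and one imprecision that is inherited from the paper rather than introduced by you.
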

\ifsup
The proof may be found in \cref{app:thm:u-ucb}.
\else
The proof is deferred to the supplementary material.
\fi
The indicator function in the problem-dependent bound vanishes for sufficiently large $n$ provided $n_{i^*} \in \omega(\log(n))$, which is equivalent to 
$B_{i^*} \in o(n / \sqrt{\log n})$.
Thus for reasonable choices of $B_1,\ldots,B_{K}$ the algorithm is going to enjoy the same asymptotic performance as UCB.
\cref{thm:u-ucb} may be proven for any index-based algorithm for which it can be shown that
\eq{
\E T_i(n) \in O\left(\frac{1}{\Delta_i^2} \log n\right)\,,
}
which includes (for example) KL-UCB \citep{CGMMS13} and Thompson sampling (see analysis by \cite{AG12,AG12b} and original paper by \cite{Tho33}), but not OC-UCB \citep{Lat15-ucb} or MOSS \citep{AB09}.

\ifsup
\subsection*{A Note on Constants}

The constants in the statement of \cref{thm:moss} can be improved by carefully tuning all thresh-holds, but the proof would
grow significantly and I would not expect a corresponding boost in practical performance. In fact, the reverse is true, since the ``weak'' bounds used
in the proof would propagate to the algorithm. 
Also note that the $4$ appearing in the square root
of the unbalanced MOSS algorithm is due to the fact that I am not assuming rewards are 
bounded in $[0,1]$ for which the variance is at most $1/4$. It is possible to replace the $4$ with $2 + \epsilon$ for any $\epsilon > 0$ by
changing the base in the peeling argument in the proof of \cref{lem:peel} as was done by \cite{Bub10} and others.
\fi

\subsection*{Experimental Results}

\pgfplotstableread[comment chars={\%}]{data/fig1.txt}{\tableTwo}
\pgfplotstableread[comment chars={\%}]{data/fig2.txt}{\tableTen}
\pgfplotsset{cycle list={{blue}, {red,dotted}, {green!50!black,dashdotdotted}, {black}}}
\pgfplotsset{every axis plot/.append style={line width=1.5pt}}

\newcommand{\defaultaxis}{
        xlabel shift=-5pt,
        ylabel shift=-2pt,
        width=8.5cm,
        height=4.5cm,
        legend cell align=left,
        compat=newest}

I compare MOSS and unbalanced MOSS in two simple simulated examples, both with horizon $n = 5000$.
Each data point is an empirical average of $\sim\!\!10^4$ i.i.d.\ samples, so error bars are too small to see. Code/data is available in the supplementary material.
The first experiment has $K = 2$ arms and $B_1 = n^{\frac{1}{3}}$ and $B_2 = n^{\frac{2}{3}}$.
I plotted the results for $\mu = (0, -\Delta)$ for varying $\Delta$. As predicted, the new algorithm performs significantly better than MOSS
for positive $\Delta$, and significantly worse otherwise (\cref{fig:exp1}).
The second experiment has $K = 10$ arms. This time $B_1 = \sqrt{n}$ and $B_k = (k-1)H \sqrt{n}$ with $H = \sum_{k=1}^9 1/k$.
Results are shown for $\mu_k = \Delta \ind{k = i^*}$ for $\Delta \in [0,1/2]$ and $i^* \in \set{1,\ldots,10}$.
Again, the results agree with the theory. The unbalanced algorithm is superior to MOSS for $i^* \in \set{1,2}$ and inferior otherwise (\cref{fig:exp2}).

\begin{minipage}{5.8cm}
\begin{figure}[H]
  \centering
  \hspace{-0.4cm}
  \begin{tikzpicture}[font=\scriptsize]
    \begin{axis}[\defaultaxis,
        xmin=-0.5,
        xmax=0.5,
        ymin=0,
        width=5.8cm,
        xlabel={$\Delta$},
        ylabel={Regret}]
      \addplot+[] table[x index=0,y index=1] \tableTwo;
      \addlegendentry{MOSS};

      \addplot+[] table[x index=0,y index=2] \tableTwo;
      \addlegendentry{U.\ MOSS};
    \end{axis}
  \end{tikzpicture}
  \caption{}\label{fig:exp1}
\end{figure}
\end{minipage}
\hspace{-1.2cm}
\begin{minipage}{10cm}
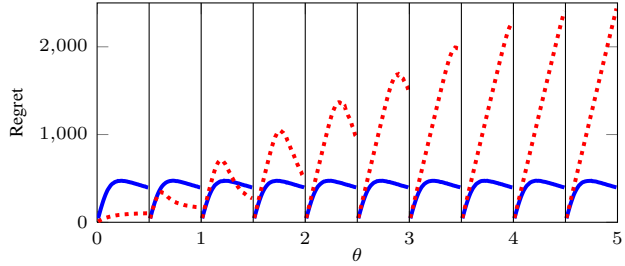
\begin{figure}[H]
  \centering
  \begin{tikzpicture}[font=\scriptsize]
    \begin{axis}[\defaultaxis,
        xmin=0,
        xmax=5,
        ymin=0,
        ymax=2500,
        xlabel={$\theta$},
        ylabel={Regret}]
      \addplot+[blue,solid,mark=none,restrict x to domain=0:0.5] table[x index=0,y index=1] \tableTen;
      \addplot+[blue,solid,mark=none,restrict x to domain=0.5:1] table[x index=0,y index=1] \tableTen;
      \addplot+[blue,solid,mark=none,restrict x to domain=1:1.5] table[x index=0,y index=1] \tableTen;
      \addplot+[blue,solid,mark=none,restrict x to domain=1.5:2] table[x index=0,y index=1] \tableTen;
      \addplot+[blue,solid,mark=none,restrict x to domain=2:2.5] table[x index=0,y index=1] \tableTen;
      \addplot+[blue,solid,mark=none,restrict x to domain=2.5:3] table[x index=0,y index=1] \tableTen;
      \addplot+[blue,solid,mark=none,restrict x to domain=3:3.5] table[x index=0,y index=1] \tableTen;
      \addplot+[blue,solid,mark=none,restrict x to domain=3.5:4] table[x index=0,y index=1] \tableTen;
      \addplot+[blue,solid,mark=none,restrict x to domain=4:4.5] table[x index=0,y index=1] \tableTen;
      \addplot+[blue,solid,mark=none,restrict x to domain=4.5:5] table[x index=0,y index=1] \tableTen;

      \addplot+[red,mark=none,dotted,restrict x to domain=0:0.5] table[x index=0,y index=2] \tableTen;
      \addplot+[red,mark=none,dotted,restrict x to domain=0.5:1] table[x index=0,y index=2] \tableTen;
      \addplot+[red,mark=none,dotted,restrict x to domain=1:1.5] table[x index=0,y index=2] \tableTen;
      \addplot+[red,mark=none,dotted,restrict x to domain=1.5:2] table[x index=0,y index=2] \tableTen;
      \addplot+[red,mark=none,dotted,restrict x to domain=2:2.5] table[x index=0,y index=2] \tableTen;
      \addplot+[red,mark=none,dotted,restrict x to domain=2.5:3] table[x index=0,y index=2] \tableTen;
      \addplot+[red,mark=none,dotted,restrict x to domain=3:3.5] table[x index=0,y index=2] \tableTen;
      \addplot+[red,mark=none,dotted,restrict x to domain=3.5:4] table[x index=0,y index=2] \tableTen;
      \addplot+[red,mark=none,dotted,restrict x to domain=4:4.5] table[x index=0,y index=2] \tableTen;
      \addplot+[red,mark=none,dotted,restrict x to domain=4.5:5] table[x index=0,y index=2] \tableTen;

      \addplot[thin,samples=50,smooth] coordinates {(0.5,0)(0.5,2500)};
      \addplot[thin,samples=50,smooth] coordinates {(1.5,0)(1.5,2500)};
      \addplot[thin,samples=50,smooth] coordinates {(2.5,0)(2.5,2500)};
      \addplot[thin,samples=50,smooth] coordinates {(3.5,0)(3.5,2500)};
      \addplot[thin,samples=50,smooth] coordinates {(4.5,0)(4.5,2500)};
      \addplot[thin,samples=50,smooth] coordinates {(1,0)(1,2500)};
      \addplot[thin,samples=50,smooth] coordinates {(2,0)(2,2500)};
      \addplot[thin,samples=50,smooth] coordinates {(3,0)(3,2500)};
      \addplot[thin,samples=50,smooth] coordinates {(4,0)(4,2500)};
      \addplot[thin,samples=50,smooth] coordinates {(5,0)(5,2500)};
    \end{axis}
  \end{tikzpicture}
  \caption{$\theta = \Delta + (i^*-1) / 2$}\label{fig:exp2}
\end{figure}
\end{minipage}

Sadly the experiments serve only to highlight the plight of the biased learner, which suffers significantly worse results than
its unbaised counterpart for most actions.

\section{Discussion}\label{sec:conc}

I have shown that the cost of favouritism for multi-armed bandit algorithms is rather serious. If an algorithm exhibits a small worst-case
regret for a specific action, then the worst-case regret of the remaining actions is necessarily significantly larger than the well-known 
uniform worst-case bound of $\Omega(\sqrt{Kn})$. This unfortunate result is in stark contrast to the experts setting for which there exist
algorithms that suffer constant regret with respect to a single expert at almost no cost for the remainder.
Surprisingly, the best achievable (non-uniform) worst-case bounds are determined up to a permutation almost entirely by the value of the smallest worst-case regret.

There are some interesting open questions. Most notably, in the adversarial setting I am not sure if the upper or lower bound is tight (or neither). 
It would also be nice to know if the constant factors can be determined exactly asymptotically, but so far this has not been done even
in the uniform case. For the stochastic setting it is natural to ask if the OC-UCB algorithm can also be modified. Intuitively one would expect this to be
possible, but it would require re-working the very long proof.

\subsubsection*{Acknowledgements}
I am indebted to the very careful reviewers who made many suggestions for improving this paper. Thank you!

\newpage
\appendix

\bibliographystyle{plainnat}
\bibliography{all}

\ifsup
\section{Table of Notation}\label{app:notation}
\noindent
\hspace{-0.2cm}
\begin{tabular}{p{2cm}p{10cm}}
$n$ & time horizon \\
$K$ & number of available actions \\
$t$ & time step \\
$k,i$ & actions \\
$\calB$ & set of achievable worst-case regrets defined in \cref{def:B} \\
$\frontier$ & boundary of $\calB$ \\
$\mu$ & vector of expected rewards $\mu \in [0,1]^K$ \\
$\mu^*$ & expected return of optimal action \\
$\Delta_j$ & $\mu^* - \mu_j$ \\
$\Delta_{ji}$ & $\mu_i - \mu_j$ \\
$\pi$ & bandit strategy \\
$I_t$ & action chosen at time step $t$ \\
$R_{\mu,k}^\pi$ & regret of strategy $\pi$ with respect to the $k$th arm \\
$R_k^\pi$ & worst-case regret of strategy $\pi$ with respect to the $k$th arm \\
$\hat \mu_{k,s}$ & empirical estimate of the return of the $k$ action after $s$ samples \\
$T_k(t)$ & number of times action $k$ has been taken at the end of time step $t$ \\
$i^*$ & optimal action \\
$\log_{+}\!(x)$ & maximum of $0$ and $\log(x)$ \\ 
$\mathcal N(\mu, \sigma^2)$ & Gaussian with mean $\mu$ and variance $\sigma^2$ \\
\end{tabular}
\fi

\ifsup
\section{Proof of \cref{thm:u-ucb}}\label{app:thm:u-ucb}

Recall that the proof of UCB depends on showing that
\eq{
\E T_i(n) \in O\left(\frac{1}{\Delta_i^2} \log n\right)\,.
}
Now unbalanced UCB operates exactly like UCB, but with shifted rewards.
Therefore for unbalanced UCB we have
\eq{
\E T_i(n) \in O\left(\frac{1}{\bar \Delta_i^2} \log n\right)\,,
}
where
\eq{
\bar \Delta_i \geq \Delta_i + \sqrt{\frac{\log n}{n_i}} - \sqrt{\frac{\log n}{n_{i^*}}}\,.
}
Define :
\eq{
A &= \set{i : \Delta_i \geq 2\sqrt{\frac{\log n}{n_{i^*}}}} \\
}
If $i \in A$, then $\Delta_i \leq 2 \bar \Delta_i$ and $\bar \Delta_i \geq \sqrt{\frac{\log n}{n_i}}$.
Therefore
\eq{
\Delta_i \E T_i(n)
&\in O\left(\frac{\Delta_i}{\bar \Delta_i^2} \log n\right) 
\subseteq O\left(\frac{1}{\bar \Delta_i} \log n\right) 
\subseteq O\left(\sqrt{n_i \log n}\right)
\subseteq O\left(\frac{n}{B_i} \sqrt{\log n}\right)\,.
}
For $i \notin A$ we have $\Delta_i < 2\sqrt{\frac{\log n}{n_{i^*}}}$ thus
\eq{
\E\left[\sum_{i \notin A} \Delta_iT_i(n) \right] 
&\in O\left(n \sqrt{\frac{\log n}{n_{i^*}}}\right)
\subseteq O\left(B_{i^*} \sqrt{\log n}\right)\,.
}
Therefore
\eq{
R^\pi_{\mu,i^*} 
= \sum_{i=1}^K \Delta_i \E T_i(n) 
\in O\left(\left(B_{i^*} + \sum_{i \in A} \frac{n}{B_i}\right) \sqrt{\log n}\right)
&= O\left(B_{i^*} \sqrt{\log n}\right)
}
as required.
For the problem-dependent bound we work similarly.
\eq{
R^\pi_{\mu,i^*}
&= \sum_{i=1}^K \Delta_i \E T_i(n) \\
&\in O\left(\sum_{i \in A} \frac{1}{\bar \Delta_i} \log n + \ind{A \neq \emptyset} B_{i^*}\sqrt{\log n}\right) \\
&\in O\left(\sum_{i \in A} \frac{1}{\Delta_i} \log n + \ind{A \neq \emptyset} B_{i^*}\sqrt{\log n}\right)\,.
}

\fi

\ifsup
\section{KL Techniques}\label{app:kl}

Let $\mu_1, \mu_k \in \R^K$ be two bandit environments as defined in the proof of \cref{thm:lower}.
Here I prove the claim that
\eq{
\E^\pi_{\mu_k} T_k(n) - \E^\pi_{\mu_1} T_k(n) \leq n\epsilon_k \sqrt{\E^\pi_{\mu_1} T_k(n)}\,.
}
The result follows along the same lines as the proof of the lower bounds given by \cite{ACFS95}.
Let $\set{\calF_t}_{t=1}^n$ be a filtration where $\calF_t$ contains information about rewards and actions chosen
up to time step $t$. So $g_{I_t,t}$ and $\ind{I_t = i}$ are measurable with respect to $\calF_t$.
Let $P_1$ and $P_k$ be the measures on $\calF$ induced by bandit problems $\mu_1$ and $\mu_k$ respectively. 
Note that $T_k(n)$ is a $\calF_n$-measurable random variable bounded in $[0,n]$. Therefore
\eq{
\E^\pi_{\mu_k} T_k(n) - \E^\pi_{\mu_1} T_k(n) 
&\sr{(a)}\leq n \sup_A \left|P_1(A) - P_2(A) \right| \\
&\sr{(b)}\leq n \sqrt{\frac{1}{2} \KL(P_1, P_k)}\,, 
}
where the supremum in (a) is taken over all measurable sets (this is the total variation distance) and (b) follows from
Pinsker's inequality.
It remains to compute the KL divergence. 
Let $P_{1,t}$ and $P_{k,t}$ be
the conditional measures on the $t$th reward. By the chain rule for the KL divergence we have
\eq{
\KL(P_1, P_k) = \sum_{t=1}^n \E_{P_1} \KL(P_{1,t}, P_{k,t})
\sr{(a)}= 2\epsilon_k^2 \sum_{t=1}^n \E_{P_1} \ind{I_t = k} 
= 2\epsilon_k^2 \E^\pi_{\mu_1} T_k(n)\,,
}
where (a) follows by noting that if $I_t \neq k$, then the distribution of the rewards at time step $t$ is the same for both 
bandit problems $\mu_1$ and $\mu_k$. For $I_t = k$ we have the difference in means is $(\mu_k)_k - (\mu_1)_k = \epsilon_k$ and
since the distributions are Gaussian the KL divergence is $2\epsilon_k^2$.
For Bernoulli random noise the KL divergence is also $\Theta(\epsilon_k^2)$ provided $(\mu_k)_k \approx (\mu_1)_k \approx 1/2$ and so
a similar proof works for this case. See the work by \cite{ACFS95} for an example.

\fi

\ifsup
\section{Adversarial Bandits}\label{app:adv}

In the adversarial setting I obtain something similar. First I introduce some new notation.
Let $g_{i,t} \in [0,1]$ be the gain/reward from choosing action $i$ at time step $t$. This is chosen in an arbitrary way by the adversary with $g_{i,t}$
possibly even dependent on the actions of the learner up to time step $t$. The regret difference between the gains obtained by the learner and those
of the best action in hindsight. 
\eq{
R_g^\pi = \max_{i \in \set{1,\ldots,K}} \E\left[\sum_{t=1}^n g_{i,t} - g_{I_t,t}\right]\,.
}
I make the most obvious modification to the Exp3-$\gamma$ algorithm, which is to bias the prior towards the special action and tune the learning rate accordingly.
The algorithm accepts as input the prior $\rho \in [0,1]^K$, which must satisfy $\sum_i \rho_i = 1$, and the learning rate $\eta$.

\begin{center}
\begin{minipage}{11cm}
\begin{algorithm}[H]
\caption{Exp3-$\gamma$}\label{alg:ewa}
\begin{algorithmic}[1]
\State {\bf Input:} $K$, $\rho \in [0,1]^K$, $\eta$
\State $w_{i,0} = \rho_i$ for each $i$
\For{$t \in 1,\ldots,n$}
\State Let $p_{i,t} = \frac{w_{i,t-1}}{\sum_{i=1}^K w_{i,t-1}}$
\State Choose action $I_t = i$ with probability $p_{i,t}$ and observe gain $g_{I_t,t}$
\State $\tilde \ell_{t,i} = \frac{(1 - g_{t,i}) \ind{I_t = i}}{p_{i,t}}$
\State $w_{i,t} = w_{i,t-1} \exp\left(-\eta \tilde \ell_{t,i}\right)$
\EndFor
\end{algorithmic}
\end{algorithm}
\end{minipage}
\end{center}

The following result follows trivially from the standard proof.

\todot{Does Exp3-$\gamma$ even appear in this reference? First reference for Exp-$\gamma$?}

\begin{theorem}[\cite{BC12}]\label{thm:ewa}
Let $\pi$ be the strategy determined by \cref{alg:ewa}, then
\eq{
R_g^\pi \leq \eta Kn + \frac{1}{\eta} \log\frac{1}{\rho_{i^*}}\,.
}
\end{theorem}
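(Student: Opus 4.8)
The plan is to run the standard exponential-weights potential argument, specialised to the non-uniform prior $\rho$. Write $\ell_{t,i} = 1 - g_{t,i} \in [0,1]$ for the true loss, so that $\tilde\ell_{t,i}$ is the importance-weighted estimator used by the algorithm, and let $W_t = \sum_{i=1}^K w_{i,t}$ be the total weight, with $W_0 = \sum_i \rho_i = 1$. I would sandwich $\log(W_n/W_0)$ between two bounds. The lower bound is immediate by retaining only the comparator term: since $w_{i^*,n} = \rho_{i^*}\exp(-\eta\sum_{t=1}^n\tilde\ell_{t,i^*})$, we get $\log(W_n/W_0) \ge \log\rho_{i^*} - \eta\sum_{t=1}^n \tilde\ell_{t,i^*}$.

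For the upper bound, note that $W_t/W_{t-1} = \sum_i p_{i,t}\exp(-\eta\tilde\ell_{t,i})$. Applying $e^{-x} \le 1 - x + x^2/2$ (valid since $\tilde\ell_{t,i}\ge 0$) followed by $\log(1+u)\le u$ gives
\eq{
\log\frac{W_t}{W_{t-1}} \le -\eta\sum_i p_{i,t}\tilde\ell_{t,i} + \frac{\eta^2}{2}\sum_i p_{i,t}\tilde\ell_{t,i}^2\,.
}
Summing over $t$ and combining with the lower bound yields, for the fixed comparator $i^*$,
\eq{
\sum_{t=1}^n\sum_i p_{i,t}\tilde\ell_{t,i} - \sum_{t=1}^n \tilde\ell_{t,i^*} \le \frac{1}{\eta}\log\frac{1}{\rho_{i^*}} + \frac{\eta}{2}\sum_{t=1}^n\sum_i p_{i,t}\tilde\ell_{t,i}^2\,.
}

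Next I would take expectations and identify each term. The crucial observation is the exact identity $\sum_i p_{i,t}\tilde\ell_{t,i} = \sum_i \ell_{t,i}\ind{I_t = i} = 1 - g_{I_t,t}$, so the first sum on the left is simply the learner's realised loss; together with conditional unbiasedness $\E[\tilde\ell_{t,i^*}\mid\calF_{t-1}] = 1 - g_{i^*,t}$ this makes the expected left side equal to $\E\sum_t g_{i^*,t} - \E\sum_t g_{I_t,t}$, which is $R_g^\pi$ once $i^*$ is taken to be the best action in hindsight. For the quadratic term, $\E[p_{i,t}\tilde\ell_{t,i}^2\mid\calF_{t-1}] = (1 - g_{t,i})^2 \le 1$, because $\E[\ind{I_t=i}\mid\calF_{t-1}] = p_{i,t}$ cancels one factor of $1/p_{i,t}$; hence $\E\sum_i p_{i,t}\tilde\ell_{t,i}^2 \le K$ and the double sum is at most $Kn$. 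Putting these together gives $R_g^\pi \le \tfrac{\eta}{2}Kn + \tfrac{1}{\eta}\log(1/\rho_{i^*}) \le \eta Kn + \tfrac{1}{\eta}\log(1/\rho_{i^*})$, matching the claim with room to spare.

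There is no genuine obstacle here — as the surrounding text notes, this is exactly the Exp3 analysis with a non-uniform prior. The only points demanding care are bookkeeping: verifying that the conditional identities hold against an adaptive adversary (the gains $g_{t,i}$ are determined before the randomisation at round $t$, so everything is measurable with respect to $\calF_{t-1}$), and noting that the inequality is first derived for a deterministic comparator and only afterwards specialised to the maximising $i^*$, so that passing to the maximum over arms is legitimate. The factor-of-two slack between $\tfrac{\eta}{2}Kn$ and the stated $\eta Kn$ comfortably absorbs the constant from the $e^{-x}$ bound.
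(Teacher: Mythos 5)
Your proposal is correct and is precisely the ``standard proof'' that the paper itself invokes without reproducing: the paper attributes the theorem to \cite{BC12}, and your potential-function argument with a non-uniform prior --- loss-based importance-weighted estimates, $e^{-x}\le 1-x+x^2/2$ for $x\ge 0$, the exact identity $\sum_i p_{i,t}\tilde\ell_{t,i}=1-g_{I_t,t}$, conditional unbiasedness, and the second-moment bound $\E\bigl[\sum_i p_{i,t}\tilde\ell_{t,i}^2\mid\calF_{t-1}\bigr]\le K$ --- is exactly that standard analysis, including the correct handling of the adaptive adversary and of fixing the comparator before maximising. Your resulting constant $\tfrac{\eta}{2}Kn$ is in fact slightly sharper than the stated $\eta Kn$, so the claim follows with room to spare.
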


\begin{corollary}\label{cor:ewa}
If $\rho$ is given by
\eq{
\rho_i = \begin{cases}
\exp\left(-\frac{B_1^2}{4Kn}\right) & \text{if } i = 1 \\
(1 - \rho_1) / (K-1) &\text{otherwise}
\end{cases}
}
and $\eta = B_1 / (2Kn)$, then
\eq{
R_g^\pi \leq \begin{cases}
B_1 & \text{if } i^* = 1 \\
\frac{B_1}{2} + \frac{2Kn}{B_1} \log\left(\frac{4Kn(K-1)}{B_1^2}\right) & \text{otherwise}\,.
\end{cases}
}
\end{corollary}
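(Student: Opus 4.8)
The plan is to obtain the corollary as a direct specialisation of \cref{thm:ewa}, whose bound $R_g^\pi \le \eta K n + \tfrac{1}{\eta}\log(1/\rho_{i^*})$ already does all the heavy lifting. Everything reduces to substituting $\eta = B_1/(2Kn)$ and the prescribed prior $\rho$, and then splitting into the two cases according to whether the best arm in hindsight is the favoured arm $1$.

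First I would simplify the exploration term, which does not depend on $i^*$: with $\eta = B_1/(2Kn)$ one has $\eta K n = B_1/2$, so this is the common leading summand in both branches. The entire remaining task is to control the second term $\tfrac{1}{\eta}\log(1/\rho_{i^*}) = \tfrac{2Kn}{B_1}\log(1/\rho_{i^*})$ in each case.

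For $i^* = 1$ the computation is exact. Since $\rho_1 = \exp(-B_1^2/(4Kn))$ we get $\log(1/\rho_1) = B_1^2/(4Kn)$, whence $\tfrac{2Kn}{B_1}\cdot\tfrac{B_1^2}{4Kn} = B_1/2$; adding the exploration term gives $R_g^\pi \le B_1/2 + B_1/2 = B_1$, which is the first branch. For $i^* \ne 1$ we have $\rho_{i^*} = (1-\rho_1)/(K-1)$, so that $\log(1/\rho_{i^*}) = \log\bigl((K-1)/(1-\rho_1)\bigr)$, and the only non-routine step is to lower bound $1-\rho_1 = 1 - \exp(-x)$ with $x = B_1^2/(4Kn)$ by an elementary convexity estimate such as $1 - e^{-x} \ge x/2$ (or the sharper $1-e^{-x}\ge x(1-x/2)$ valid for $x \ge 0$). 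This converts $(K-1)/(1-\rho_1)$ into a constant multiple of $4Kn(K-1)/B_1^2 = (K-1)/x$ and, after re-adding $B_1/2$, produces the claimed logarithmic expression.

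I expect the only real obstacle to be the bookkeeping of constants in this last step: matching $(K-1)/x$ exactly would require $1-e^{-x}\ge x$, which holds only in the limit $x\to 0$, so pinning down the constant $4$ inside the logarithm relies on the relevant small-$B_1$ regime, where $x = B_1^2/(4Kn)$ is small and $1-e^{-x}$ sits within a constant factor of $x$. Apart from verifying that this regime covers the cases of interest and that the prior is well-defined ($\rho_1 \in (0,1]$ and $\rho_{i^*}\ge 0$), the argument is a one-line substitution into \cref{thm:ewa}.
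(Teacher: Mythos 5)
Your proposal follows exactly the paper's route: substitute $\eta = B_1/(2Kn)$ and the given prior into \cref{thm:ewa}, note $\eta K n = B_1/2$, handle $i^* = 1$ by exact computation, and for $i^* \neq 1$ bound $\log(1/\rho_{i^*}) = \log\bigl((K-1)/(1-e^{-x})\bigr)$ with $x = B_1^2/(4Kn)$. The one place you hesitate is precisely the one place where the paper's own proof is flawed, so your caution is well founded. The paper simply asserts
\[
\log\left(\frac{K-1}{1-\exp\left(-x\right)}\right) \le \log\left(\frac{K-1}{x}\right),
\]
which requires $1 - e^{-x} \ge x$; as you note, this holds only at $x = 0$ and is false for every $x > 0$, so the constant $4$ inside the logarithm of \cref{cor:ewa} is not actually justified by the paper's argument. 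Your repair — $1 - e^{-x} \ge x/2$, valid for $x \in [0,1]$, i.e.\ in the regime $B_1 \le 2\sqrt{Kn}$ where the corollary has any content — is the honest version: it gives $\log\bigl(8Kn(K-1)/B_1^2\bigr)$ in place of $\log\bigl(4Kn(K-1)/B_1^2\bigr)$, an additive difference of $\frac{2Kn}{B_1}\log 2$ that is immaterial to the result. So your proof is correct (with the slightly larger constant), structurally identical to the paper's, and in fact more rigorous at the single step where the two differ.
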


\begin{proof}
The proof follows immediately from \cref{thm:ewa} by noting that for $i^* \neq 1$ we have
\eq{
\log \frac{1}{\rho_{i^*}} 
&= \log \left(\frac{K-1}{1 - \exp\left(-\frac{B_1^2}{4Kn}\right)}\right) \\
&\leq \log \left(\frac{4Kn(K-1)}{B_1^2}\right) 
}
as required.
\end{proof}

\fi

\ifsup
\section{Concentration}\label{app:conc}

The following straight-forward concentration inequality is presumably well known and the proof of an almost identical result is available by \cite{BLM13}, but
an exact reference seems hard to find.

\begin{theorem}\label{thm:conc}
Let $X_1,X_2,\ldots, X_n$ be independent and $1$-subgaussian, then
\eq{
\P{\exists t \leq n : \frac{1}{t} \sum_{s \leq t} X_s \geq \frac{\epsilon}{t}} \leq \exp\left(-\frac{\epsilon^2}{2n}\right)\,.
}
\end{theorem}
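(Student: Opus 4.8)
The plan is to recognise that the event in question is just a maximal inequality for the partial sums and to attack it with the standard exponential (Chernoff) method combined with Doob's maximal inequality. First I would write $S_t = \sum_{s \le t} X_s$ and observe that the event $\tfrac{1}{t}\sum_{s \le t} X_s \ge \epsilon/t$ is simply $S_t \ge \epsilon$, so the probability to bound is $\P{\exists t \le n : S_t \ge \epsilon} = \P{\max_{t \le n} S_t \ge \epsilon}$.

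Next, fix $\lambda > 0$ and consider $M_t = \exp(\lambda S_t)$. Since each $X_s$ is $1$-subgaussian it has mean zero, so $S_t$ is a martingale with respect to the natural filtration $\calF_t = \sigma(X_1,\ldots,X_t)$, and hence $M_t$ is a non-negative submartingale by Jensen's inequality (equivalently, $\E[\exp(\lambda X_t) \mid \calF_{t-1}] \ge 1$). Applying Doob's maximal inequality for submartingales gives
\[
\P{\max_{t \le n} S_t \ge \epsilon} = \P{\max_{t \le n} M_t \ge e^{\lambda \epsilon}} \le e^{-\lambda \epsilon}\, \E[M_n]\,.
\]

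Then I would bound $\E[M_n]$ using the subgaussian moment generating function: by independence and the $1$-subgaussian assumption, $\E[\exp(\lambda S_n)] = \prod_{s=1}^n \E[\exp(\lambda X_s)] \le \exp(n\lambda^2/2)$, so the right-hand side is at most $\exp(n\lambda^2/2 - \lambda \epsilon)$. Finally, optimising the free parameter by taking $\lambda = \epsilon/n$ yields the exponent $-\epsilon^2/(2n)$, which is exactly the claimed bound.

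There is no serious obstacle here — this is a routine combination of Chernoff bounding and a maximal inequality. The only points that require a little care are (i) checking that $\exp(\lambda S_t)$ is genuinely a submartingale, which hinges on the fact that $1$-subgaussianity forces $\E[X_s] = 0$ and hence $\E[\exp(\lambda X_s)] \ge 1$ by Jensen; and (ii) making sure the maximal inequality is applied to the submartingale $\exp(\lambda S_t)$, for which Doob's inequality gives the clean bound $\E[M_n]/a$, rather than to the supermartingale $\exp(\lambda S_t - t\lambda^2/2)$, whose time-dependent normalisation would muddy the event $\set{S_t \ge \epsilon}$.
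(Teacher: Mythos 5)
Your proposal is correct and follows essentially the same route as the paper: both rewrite the event as $\sum_{s\leq t} X_s \geq \epsilon$, note that $\exp(\lambda \sum_{s\leq t} X_s)$ is a non-negative submartingale, apply Doob's maximal inequality together with the subgaussian bound $\E[\exp(\lambda S_n)] \leq \exp(n\lambda^2/2)$, and optimise over $\lambda$ (the paper writes this as an infimum over $\lambda \geq 0$, which is attained at your choice $\lambda = \epsilon/n$). Your added care about why $\exp(\lambda S_t)$ is a submartingale matches the paper's appeal to convexity, so there is no substantive difference.
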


\begin{proof}
Since $X_i$ is $1$-subgaussian, by definition it satisfies
\eq{
(\forall \lambda \in \R) \qquad \E\left[\exp\left(\lambda X_i\right)\right] \leq \exp\left(\lambda^2/2\right)\,.
}
Now $X_1,X_2,\ldots$ are independent and zero mean, so by convexity of the exponential function $\exp(\lambda \sum_{s=1}^t X_s)$ is a sub-martingale.
Therefore if $\epsilon > 0$, then by Doob's maximal inequality 
\eq{
\label{eq:maximal}
\P{\exists t \leq n : \sum_{s=1}^t X_s \geq \epsilon} 
&= \inf_{\lambda \geq 0} \P{\exists t \leq n : \exp\left(\lambda \sum_{s=1}^t X_s\right) \geq \exp\left(\lambda \epsilon\right)}  \\
&\leq \inf_{\lambda \geq 0} \exp\left(\frac{\lambda^2 n}{2} -\lambda \epsilon\right)  \\
&= \exp\left(-\frac{\epsilon^2}{2n}\right)
}
as required.
\end{proof}
\else
\fi

\end{document}